\documentclass[11pt,letterpaper]{article}

\usepackage[T1]{fontenc}
\usepackage[utf8]{inputenc}
\usepackage{lmodern}
\usepackage[margin=1in]{geometry}
\usepackage{microtype}
\usepackage{amsmath,amssymb,amsthm,mathtools}
\usepackage{booktabs,tabularx,array}
\usepackage{enumitem}
\usepackage{xcolor}
\usepackage[ruled,vlined,linesnumbered]{algorithm2e}
\usepackage{verbatim}
\usepackage[section]{placeins}
\usepackage[authoryear,round]{natbib}
\usepackage{hyperref}
\usepackage{aliascnt}
\usepackage[nameinlink,capitalise,noabbrev]{cleveref}
\usepackage{fancyhdr}

\definecolor{linkblue}{RGB}{25,65,110}
\definecolor{notebg}{RGB}{247,248,250}
\setlist{leftmargin=1.6em,itemsep=2pt,topsep=4pt}
\allowdisplaybreaks
\numberwithin{equation}{section}

\newtheorem{theorem}{Theorem}[section]
\newaliascnt{proposition}{theorem}
\newtheorem{proposition}[proposition]{Proposition}
\aliascntresetthe{proposition}
\newaliascnt{lemma}{theorem}
\newtheorem{lemma}[lemma]{Lemma}
\aliascntresetthe{lemma}
\newaliascnt{corollary}{theorem}

\aliascntresetthe{corollary}
\theoremstyle{definition}
\newaliascnt{definition}{theorem}
\newtheorem{definition}[definition]{Definition}
\aliascntresetthe{definition}
\newaliascnt{remark}{theorem}

\aliascntresetthe{remark}
\crefname{lemma}{Lemma}{Lemmas}
\crefname{proposition}{Proposition}{Propositions}
\newcommand{\E}{\mathbb E}
\newcommand{\Pp}{\mathbb P}
\newcommand{\ind}{\mathbf 1}
\newcommand{\eps}{\varepsilon}
\newcommand{\wh}{\widehat}
\newcommand{\Reg}{\operatorname{Reg}}

\newcommand{\cZ}{\mathcal Z}

\newcommand{\cC}{\mathcal C}
\DeclareMathOperator*{\argmax}{arg\,max}
\SetKwInOut{Input}{Input}
\SetKwInOut{Output}{Output}
\SetKw{Return}{return}
\SetKwFunction{DP}{BackwardInduction}
\DontPrintSemicolon

\title{Optimal No-Regret Learning for Repeated Prophet Inequality}
\author{Kun Wang\\Purdue University}
\date{\today}

\begin{document}
\maketitle
\thispagestyle{plain}
\vspace{-1.5em}

\begin{abstract}
We study repeated prophet inequalities under prefix feedback. In each of
$T$ rounds, a learner encounters fresh values drawn independently from
$n$  boxes with unknown $[0,1]$-supported distributions in a fixed order and must
irrevocably accept one, observing only the prefix up to its stopping box.
Regret is measured against the optimal stopping policy that knows the distributions. We give an efficient algorithm achieving
$\widetilde O(\sqrt{T})$ expected regret, matching the lower bound up to logarithmic factors.
Our algorithm explores directly through near-optimal policies, combining empirical backward induction with box-specific reach bonuses. A relative-drop aggregation rule then exploits the nesting structure
of observed prefixes to preserve exploration, thereby removing the polynomial dependence on the box number $n$. This resolves an open question posed by \cite{liu2025improved}.
\end{abstract}

\section{Introduction}
\label{sec:introduction}

Prophet inequalities study a fundamental question in sequential
decision-making: how should one choose among opportunities that arrive
one at a time, when accepting an opportunity prevents all future choices
and rejecting it is irreversible? In the classical problem, the rewards
are independent nonnegative random variables with known distributions. A stopping
rule observes their realizations sequentially and selects at most one,
whereas a prophet sees all realizations before choosing. The classical
prophet inequality guarantees one half of the prophet's expected reward,
and this factor is tight in general
\citep{krengel1977semiamarts,samuelcahn1984comparison}.
This theory quantifies the cost of not knowing future realizations, but
leaves a separate learning question: what happens when the distributions
are also unknown?

We study this question in the repeated prophet inequality problem with
\emph{prefix feedback}. There are $n$ boxes with unknown distributions
supported on $[0,1]$. In each of $T$ rounds, fresh values are drawn
independently across boxes and rounds, and the learner inspects the boxes
in a fixed order. Upon observing a value, it either accepts that value
and ends the round or rejects it and continues. Consequently, it observes
only the prefix ending at its stopping box. The benchmark is the optimal
stopping policy that knows the distributions but faces the same
sequential decisions, not the prophet that knows all realized values.
Our goal is to minimize cumulative expected regret against this benchmark.
The central difficulty is that a stopping decision determines both the
reward earned now and the information available for future rounds.

An important starting point is the 2024 result of
\citet{jin2024sample}: an $\varepsilon$-optimal stopping policy can be
learned from $\widetilde O(\varepsilon^{-2})$ independent samples of the
entire value vector, with no dependence on $n$. This already implies an
$n$-independent $\widetilde O(T^{2/3})$ regret bound under prefix feedback.
Indeed, the learner can traverse all boxes for $m$ rounds, learn a policy
from the resulting complete samples, and use that policy thereafter.
The exploration rounds cost at most $m$ regret, while the learned policy
has suboptimality $\widetilde O(m^{-1/2})$. Thus this
\emph{explore-then-commit} reduction gives
\[
  \Reg_T
  \le m+\widetilde O\!\left(\frac{T-m}{\sqrt m}\right)
  =\widetilde O(T^{2/3})
  \qquad\text{for }m\asymp T^{2/3}.
\]
The limitation is the cost of obtaining the samples: a full traversal
can sacrifice a constant amount of reward, even when only a small
improvement in policy accuracy is needed.

A complementary line of work achieves the optimal $\sqrt T$ dependence,
but with polynomial dependence on the number of boxes. Under prefix
feedback, \citet{gatmiry2024bandit} obtain $\widetilde O(n^3\sqrt T)$ regret for the first time. \citet{agarwal2024semibandit} obtain
$\widetilde O(n\sqrt T)$ regret for any monotone stochastic optimization problem (including prophet inequality). Subsequently, \citet{liu2025improved} improve this
to $\widetilde O(\sqrt{nT})$. Meanwhile, an $\Omega(\sqrt T)$ lower bound
holds even with two boxes \citep{gatmiry2024bandit}. The two available
guarantees therefore offer different advantages: the consequence of
\citet{jin2024sample} avoids dependence on $n$ but has a $T^{2/3}$ horizon
dependence, whereas the bound of \citet{liu2025improved} has the desired
$\sqrt T$ dependence but pays a factor of $\sqrt n$. Recognizing this
contrast, \citet{liu2025improved} explicitly ask whether both advantages
can be obtained simultaneously:
\begin{quote}
\emph{Can repeated prophet inequality under prefix feedback be learned
with $\widetilde O(\sqrt T)$ regret, with only logarithmic dependence on
$n$?}
\end{quote}

We resolve this question affirmatively. We give a polynomial-time
algorithm that, for arbitrary independent $[0,1]$-supported distributions,
achieves $\Reg_T=O\!\left(\sqrt T\,[1+\log(nT)]^2\right)$. This matches the $\Omega(\sqrt T)$ lower bound up to logarithmic factors.
The improvement is simultaneous: we improve the $n$-independent
$\widetilde O(T^{2/3})$ guarantee to the optimal horizon exponent, while
removing the polynomial dependence on $n$ from the previous
$\widetilde O(\sqrt{nT})$ guarantee. 

\paragraph{The challenge}
 Removing the polynomial dependence on $n$ requires more than
sharpening the analysis of the existing
$\widetilde O(\sqrt{nT})$ algorithm \citep{liu2025improved}: its coordinatewise optimism
can itself cause excessive exploration. Consider the deterministic
instance
\[
    X_i=\frac12-\frac{i}{2n},
    \qquad i\in[n].
\]
The values decrease with the box index, so the optimal policy
immediately accepts the first value. Continuing to the last box,
by contrast, sacrifices a constant amount of reward.

Why would an optimistic learner keep searching on this instance?
Even after repeatedly observing the same values, its optimistic
model leaves open the possibility of a rare, much larger reward
at each box. With many boxes remaining, these individually small
possibilities accumulate, making continued exploration appear
more attractive than accepting the current value. The learner
therefore keeps searching for rewards that never materialize.
In fact, on this instance, the algorithm traverses all boxes
throughout its first $n$ rounds and incurs $\Omega(n)$ regret.
Taking $T=n$, this is substantially larger than the
$\widetilde O(\sqrt n)$ guarantee we seek. The example highlights
the central challenge: exploration must be controlled by the
total reward loss of the policy being played, rather than
driven solely by optimism at individual boxes.

\paragraph{Our approach}
We instead explore through near-optimal policies. At each accuracy
scale $\varepsilon$, we maintain a behavior policy with
$O(\varepsilon)$ suboptimality whose reach probabilities cover,
up to logarithmic factors, those of all sufficiently near-optimal
threshold policies and their initial mixtures.
Empirical backward induction constructs box-specific explorers,
and mixing them with an empirical reward-maximizing baseline
controls their reward loss. We then exploit the nesting of prefix
observations: exploring a later box also reveals every earlier box.
Our relative-drop aggregation combines these explorers with only
a logarithmic loss in coverage, rather than a factor of $n$.
A localized concentration analysis supports each refinement using
$\widetilde O(\varepsilon^{-2})$ rounds at
$O(\varepsilon)$ regret per round. Summing over geometrically
decreasing accuracy scales yields $\widetilde O(\sqrt T)$ regret.

\section{Related Work}
\label{sec:related}

\paragraph{Classical prophet inequality}
Classical prophet inequalities compare the expected reward of an online
stopping rule with the expected maximum of independent nonnegative
random variables. The foundational results of
\citet{krengel1977semiamarts,samuelcahn1984comparison} establish the
sharp $1/2$ guarantee. Subsequent work extends results to different variants, such as limited-information and
sample-based settings and other setting\citep{azar2014prophet,rubinstein2020optimal,jiang2025tight,kleinberg2012matroid,kleinberg2019matroid,feldman2021online,gravin2019prophet,ezra2022prophet}.
This line of work mainly assume known distribution.

\paragraph{Repeated prophet inequality}
For repeated prophet inequality, it assumes the distribution of each box is unknown and try to learn the distribution through repeated interactions. There are two standard settings: PAC setting and regret-minimization setting \citep{guo2019settling,guo2021generalizing,gatmiry2024bandit,agarwal2024semibandit,jin2024sample,liu2025improved}. In our work, we focus on regret-minimization setting. \citet{gatmiry2024bandit} obtain
$\widetilde O(n^3\sqrt T)$ regret under stronger bandit feedback. Under prefix feedback, \citet{agarwal2024semibandit}
give a $\widetilde O(n\sqrt T)$ regret bound. Then \citet{liu2025improved} improves this to $\tilde{O}(\sqrt{nT})$ based on similar algorithmic framework. Inspired by \cite{jin2024sample}, who gives an upper bound independent of box number $n$ in the PAC setting,  \citet{liu2025improved} ask whether the polynomial dependence on $n$ can be removed as well in the regret-minimization setting. Our $O(\sqrt T\,[1+\log(nT)]^2)$ bound addresses this question and matches the $\Omega(\sqrt T)$ lower bound
\citep{gatmiry2024bandit} up to logarithmic factors.

\section{Problem Setting and Main Guarantee}
\label{sec:model}

There are $n$ boxes with unknown distributions $D_1,\ldots,D_n$
supported on $[0,1]$. In each round $t$, a fresh vector $(X_{t,1},\ldots,X_{t,n}) \sim D:=\bigotimes_{i=1}^n D_i$
is drawn independently of all previous rounds. The learner inspects the
boxes in the fixed order $1,\ldots,n$. After observing a value, it either
accepts that value and ends the round or irrevocably rejects it and
continues. If it stops at box $S_t\in\{1,\ldots,n\}$, it receives
$X_{t,S_t}$ and observes only the prefix
$X_{t,1},\ldots,X_{t,S_t}$. We refer to this observation model as
\emph{prefix feedback}.

For a policy $\pi$, let $S_\pi$ denote its stopping location. Define $V(\pi)=\E_D[X_{S_\pi}], q_i^\pi=\Pp_D(S_\pi\ge i),V^\star=\sup_\pi V(\pi), \Delta(\pi)=V^\star-V(\pi)$,
where the expectations and probabilities include the policy's internal
randomness. Therefore, $q_i^\pi$ is the probability that $\pi$ reaches box $i$,
and $\Delta(\pi)$ is its suboptimality gap. We set $q_{n+1}^\pi=0$.
The \emph{full-traversal policy}, denoted by $F$, always stops at the last
box and therefore satisfies $q_i^F=1$ for every $i\in\{1,\ldots,n\}$ and
$\Delta(F)\le1$.

A \emph{threshold policy} stops at the first box $i<n$ whose
value satisfies $X_i\ge\tau_i$, where $\tau_i$ is the threshold of box $i$. For known distributions, \emph{backward induction} works from the
last box backward, setting the threshold at each box to the
optimal expected reward obtainable by continuing to the remaining
boxes. This yields an optimal threshold policy $\pi^\star$ satisfying
$V(\pi^\star)=V^\star$. A \emph{mixture} randomizes over policies before observing
any values in the round and follows the selected policy throughout
that round. For a mixture
$\mu=\sum_{k=1}^K w_k\pi_k$, where $w_k\ge0$ and $\sum_k w_k=1$, $V(\mu)=\sum_{k=1}^K w_k V(\pi_k)$, $q_i^\mu=\sum_{k=1}^K w_k q_i^{\pi_k}$.

Throughout the paper, we restrict attention to all finite initial mixtures of threshold policies. Denote this policy class as $\Pi$.  Let $B_t$ denote the policy selected based on the history before round
$t$. Define $ R_T=\sum_{t=1}^T\Delta(B_t)$ and expected cumulative regret as $\Reg_T=\E[\mathcal R_T]=\sum_{t=1}^T\bigl(V^\star-\E[X_{t,S_t}]\bigr)$.

\begin{theorem}[Main guarantee]
\label{thm:main}
\cref{alg:full-horizon} satisfies $\Reg_T=O\!\left(\sqrt{T}\,[1+\log(nT)]^2\right)$.
\end{theorem}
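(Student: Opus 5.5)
The plan is an epoch-based doubling argument over geometrically decreasing accuracy scales $\varepsilon_k=2^{-k}$, $k=0,1,\dots,K$ with $K\approx\frac12\log_2 T$. Epoch $k$ lasts $m_k\asymp\varepsilon_k^{-2}L^2$ rounds, where $L=1+\log(nT)$. During epoch $k$ the algorithm plays a fixed behavior mixture $B^{(k)}\in\Pi$, built from data of earlier epochs. Two invariants are maintained on a good event $\mathcal E$ of probability at least $1-1/T$:
\begin{itemize}
\item[(I1)] $\Delta(B^{(k)})\le C\varepsilon_k$.
\item[(I2)] For every $\pi\in\Pi$ with $\Delta(\pi)\le c\varepsilon_{k}$ and every box $i$, we have $q_i^{B^{(k)}}\ge q_i^{\pi}/(C' L)$. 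That is, $B^{(k)}$ covers the reach probabilities of all near-optimal policies up to a logarithmic factor.
\end{itemize}
Given these, the regret in epoch $k$ is at most $m_k C\varepsilon_k=O(L^2/\varepsilon_k)$. Summing the geometric series up to $\varepsilon_K\asymp L/\sqrt T$ gives $O(\sqrt T L)$, and the last (possibly truncated) epoch is handled the same way. Off $\mathcal E$ we pay at most $T\cdot 1/T$. The second factor of $L$ will come from the concentration step.

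\textbf{Step 1 (localized concentration).} I will show that after epoch $k$, the empirical value $\wh V(\pi)$ satisfies, uniformly over threshold policies (and hence mixtures),
\[
|\wh V(\pi)-V(\pi)|\lesssim\varepsilon_k\quad\text{whenever } q^\pi\le C'L\,q^{B^{(k)}}\text{ coordinatewise}.
\]
The idea is to write $V(\pi)$ through a telescoping over boxes weighted by $q_i^\pi$. Box $i$ is observed in roughly $m_k q_i^{B^{(k)}}$ rounds. Applying a Bernstein / DKW-type bound with a union bound over $n$ boxes, $K$ epochs, and a discretization of thresholds costs only $\log(nT)$. Crucially, the error must aggregate as a variance-weighted sum, so that $\sum_i q_i^\pi(\cdot)$ yields a bound independent of $n$. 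I expect this to use the fact that the events $\{S_\pi=i\}$ partition the outcome, so the total mass $\sum_i \Pp(S_\pi=i)=1$.

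\textbf{Step 2 (construct $B^{(k+1)}$).} Empirical backward induction gives a baseline $\wh\pi$. By Step 1 applied to the near-optimal class (which is covered by (I2)), $\Delta(\wh\pi)=O(\varepsilon_{k+1})$. For each box $i$, I then build an explorer $\rho_i$: run empirical backward induction with a reach bonus of size $\varepsilon$ at box $i$, which maximizes $\wh V(\pi)+\varepsilon\,\wh q_i^\pi/\max_{\pi'}\wh q_i^{\pi'}$ over near-optimal policies. The behavior policy is then $B^{(k+1)}=\frac12\wh\pi+\frac12\mathrm{Agg}(\rho_1,\dots,\rho_n)$. The bonus guarantees that $\rho_i$ reaches box $i$ nearly as often as any near-optimal policy, at $O(\varepsilon)$ extra loss.

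\textbf{Step 3 (relative-drop aggregation) — the main obstacle.} A uniform mixture over the $n$ explorers would lose a factor $n$ in (I2). Instead I exploit nesting: reaching box $i$ reveals all boxes $j<i$, so only explorers at boxes where the best achievable reach $\bar q_i=\max_{\pi\text{ near-opt}}q_i^\pi$ (which is nonincreasing in $i$) drops by a constant factor need to be kept. Concretely, I select indices $i_1<i_2<\dots$ where $\bar q$ halves, together with indices grouped by dyadic level. This gives $O(\log(nT))$ representatives, since levels below $1/(nT)$ are negligible. Between consecutive representatives, $\bar q$ changes by at most a factor of $2$, so the representative at the later end covers all intermediate boxes. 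Mixing the representatives uniformly gives (I2) with loss $O(L)$. Because each explorer has loss $O(\varepsilon)$, the mixture still satisfies (I1). The delicate point is that $\bar q_i$ is itself only estimated. I would use the empirical reach computed from Step 1 and absorb the constant-factor estimation error into the halving threshold, checking that estimated reaches are accurate to within a multiplicative constant wherever $\bar q_i\gtrsim\varepsilon/L$. Boxes below this level contribute at most $\varepsilon$ to any value difference, so they can be ignored.
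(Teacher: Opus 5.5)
\textbf{Step~1 is the gap.} Your architecture matches the paper's: geometric phases, a coverage invariant over near-optimal policies, bonus explorers, and nested aggregation. But Step~1 as stated is false, and it is the step that must carry the $n$-independence.

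\begin{itemize}
\item Uniform concentration over \emph{all} threshold policies cannot be had for a $\log(nT)$ union-bound cost. A discretization of $n$-dimensional threshold vectors costs $n\log(\cdot)$ in the union bound.
\item Per-box DKW/Bernstein bounds control each $|\xi_i|$ only separately. With $t_i=P_i[a_i]$ and $m_i\gtrsim Mq_i/G$, you get $\sum_i q_i\sqrt{t_i/m_i}\approx\sqrt{G/M}\sum_i\sqrt{\Pp(S_\pi=i)}$, which can be $\sqrt{nG/M}$. That is exactly the $\sqrt{nT}$ barrier.
\item The identity $\sum_i\Pp(S_\pi=i)=1$ helps only if the box errors add in quadrature, and this genuinely fails for data-dependent thresholds. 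Take $B=F$ and uniform $D_i$. Choose each $\tau_i$ after seeing box $i$'s own samples so that $\wh t_i-t_i\gtrsim\sqrt{t_i/m}$ with $t_i\approx 1/n$. Then $\log(\wh q_n/q_n)$ is of order $-\sqrt{n/m}$, a constant error unless $m\gtrsim n$.
\end{itemize}
The missing idea is to concentrate only over a \emph{finite} family of \emph{triangular} procedures: backward induction sets $\tau_i$ from boxes $i+1,\dots,n$ alone. Conditionally on the suffix, each $\xi_i=(\wh P_i-P_i)f_i$ is then a fresh mean-zero average. An exponential supermartingale over prefixes with self-normalized proxy $\wh t_i/m_i$, followed by summation by parts against $\wh q_k-\wh q_{k+1}$, gives the radius $\theta\wh C(\mu)+\beta$. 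Coverage plus the floor $q_i^B\ge 2\eps$ then bounds $C(\mu)\lesssim G\log(1/\eps)/M$. The counts $m_i$ are random, so this also needs a union bound over dyadically rounded profiles; that is where $\ell\asymp A^2$, and hence the second logarithmic factor, comes from. Because accuracy then holds only for near-optimal members of $\cC$, you also need the localization argument: mix the empirical maximizer with $\pi^\star$ to reach gap exactly $7\eps$ and derive a contradiction. Your ``Step~1 applied to the near-optimal class'' is circular, since $\wh\pi$ is not known in advance to be near-optimal or covered.

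\textbf{Step~2 has a second gap.} A single Lagrangian bonus cannot deliver both reach coverage and $O(\eps)$ loss. Take a competitor with $\wh V(\pi)\ge v_0-a$ and $\wh q_i^\pi=r$. Optimality of the bonus policy $\rho$ gives only $\wh q_i^{\rho}\ge r-a/z$, so you need $z\gtrsim a/r$. The only loss bound is then $v_0-\wh V(\rho)\le z\wh q_i^\rho\le z$, of order $a/r$, which is $\Theta(1)$ when $r\approx\eps$. Your normalization has $z\max\wh q_i=\eps$, while competitors may trail $v_0$ by about $4\eps$; so $r-a/z<0$ and the reach guarantee is vacuous. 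The paper resolves this trade-off by sweeping the dyadic grid $\cZ_\eps$ and mixing each $\pi_{j,z}$ with the baseline at weight $a/(a+d_{j,z})$. This caps the empirical loss at $a=6\eps$ while keeping the score $s_{j,z}\ge a/z\ge r/4$ at the right grid point.

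Your Step~3 is a discretized version of the paper's relative-drop aggregation and is fine once two things are added. First, apply it to a nonincreasing envelope of the estimated scores. Second, include an $\eps F$ component so every box keeps reach at least $\eps$; this is also what guarantees $m_{\min}\ge 2\theta$ and keeps every $\wh D_i$ defined.
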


\section{The Algorithm}
\label{sec:algorithm}

In this section, we introduce the idea of  our algorithm. Our algorithm is a phase-based algorithm. At each phase, we first run a given policy $B$ for some times to collect samples from each box, then construct a new policy $B'$ as the input of next phase. The accuracy guarantee of new policy $B'$ is improved compared to input policy $B$.

First, we will introduce some notations for empirical distributions. For each box $i\in[n]$, let
$\mathcal S_i=(Y_{i,1},\ldots,Y_{i,m_i})$ be its retained sample
array, where $m_i\ge1$, and write
$\mathcal S=(\mathcal S_1,\ldots,\mathcal S_n)$.
The empirical distribution $\widehat D_i$ assigns mass $1/m_i$
to each sample and define $\widehat D=\bigotimes_{i=1}^n\widehat D_i$. Thus for any function $f$, $\mathbb E_{\widehat D_i}[f(X_i)]=\frac{1}{m_i}\sum_{s=1}^{m_i}f(Y_{i,s})$. Given a threshold policy $\pi$, let its threshold be $(\tau_1,\ldots,\tau_n)$. We define its stopping
function by $a_i^\pi(x)=\mathbf 1\{x\ge\tau_i\},\forall i<n$, and $a_n^\pi(x)=1$.
Then define empirical conditional stopping probability for policy $\pi$ as $\widehat t_i^\pi=\E_{\wh D_i}[a_i^\pi(X_i)]=\frac{1}{m_i}\sum_{s=1}^{m_i}
      \mathbf{1}\{Y_{i,s}\ge\tau_i\}$
and reach probability $\widehat q_1^\pi=1$,
    $\widehat q_{i+1}^\pi=\widehat q_i^\pi(1-\widehat t_i^\pi)\; \forall i\in [1,n)$. Its empirical reward is defined as \[\widehat V(\pi)=\mathbb E_{\widehat D}[X_{S_\pi}]=
\sum_{i=1}^n
\widehat q_i^\pi
\left(
\frac1{m_i}\sum_{s=1}^{m_i}
Y_{i,s}\mathbb{I}\{Y_{i,s}\ge\tau_i\}
\right),
\] where $S_{\pi}$ denote the stopping box of policy $\pi$. Replacing $\wh D_i$ by $D_i$ gives the corresponding true quantities, using the same realized thresholds.

At the beginning of each phase, when the algorithm collect samples, it does not use all $N_i$ samples it collect through the policy. Instead, for each box $i$, it only keeps $m_i=2^{\lfloor\log_2N_i\rfloor}$ samples to construct empirical distribution $\wh D_i$. This dyadic rounding retains
at least half of the observations at each box and simplifies
the concentration analysis by reducing the number of possible
sample-count profiles over which a union bound is taken.

After finishing collecting the samples, we construct two type policies: baseline policy and bonus policy. The baseline policy maximize the reward from empirical distribution $\wh V(\pi)$. This can be implemented by backward induction with empirical reward payoff function. The bonus policy for box $j$ try to maximize $\wh V(\pi)+z\wh q_j^\pi$. This can be implemented by modifying the payoff of each sample $Y_{i,s}$ to $Y_{i,s}+z\mathbb{I}\{i\geq j\}$. Then calculate threshold based on backward induction and run threshold policy using new payoff to decide when to stop. Next, we introduce two key technical ingredients and explain how to aggregate them into a single policy $B'$.

\begin{itemize}
    \item \textbf{Baseline mixing} A bonus policy $\pi_{j,z}$ may explore more but earn a lower empirical reward.
To control this loss, we mix each bonus policy $\pi_{j,z}$ with the baseline $\pi_0$. The mixture $\sigma_{j,z}$ selects $\pi_{j,z}$ with probability
$w_{j,z}$ and $\pi_0$ otherwise, before observing any values, and follows
the selected policy throughout the round. The score $s_{j,z}$ is the
bonus component's contribution to its empirical reach probability. Let $d_{j,z}= \wh{V}(\pi_0)-\wh V(\pi_{j,z})$. By linearity, every candidate mixture satisfies 
\begin{equation}
    \label{eq:empirical-mixture-value}
    \begin{aligned}
        \wh V(\sigma_{j,z})
    =\wh{V}(\pi_0)-\frac{a\,d_{j,z}}{a+d_{j,z}}
    \ge \wh{V}(\pi_0)-a
    \end{aligned}
\end{equation}
    Thus, mixing limits the empirical reward loss to at most $a$.
\item \textbf{Relative-drop aggregation} The final step combines the box-specific explorers into a single
behavior policy. To exploit the fact that reaching a later box also
reveals every earlier box, form a nonincreasing envelope of the
exploration scores. The full-traversal
component guarantees that every box is reached with probability at
least $\eps$, while relative-drop weighting exploits the overlap
among the prefixes observed by different explorers.
\end{itemize}

Based on the above construction in one phase algorithm, the following invariant propagation property is satisfied, which is the key lemma to prove the main theorem. We will prove it in \cref{sec:closure} after introducing all the necessary tools.
\begin{proposition}[Invariant propagation]
\label{prop:closure}
Given $G=64(1+\log(nT/\delta))$. Consider a phase $h$ with input policy $B$ and parameter $\eps$, suppose $\Delta(B)\le16\eps$ and $q_i^B \geq \max\{\frac{q_i^\pi}{G},2\eps\}\text{ for every policy $\pi$ with $\Delta(\pi)\le8\eps$}$ and $\forall i \in [n]$, then with probability $1-\frac{\delta}{2(h+1)^2}$, the phase output $B'$ satisfies  $\Delta(B')\le8\eps$ and $q_i^{B'}\ge\max\{\frac{q_i^{\pi}}{G},\eps\}\;\text{ for every policy $\pi$ with $\Delta(\pi)\le4\eps$}$ and $\forall i\in[n]$.
\end{proposition}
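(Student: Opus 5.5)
The proof has three steps: concentration from the data collected by $B$, a coverage argument for the mixed bonus explorers, and an aggregation argument for $B'$. Throughout, $G$ absorbs the logarithmic factors.

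\textbf{Step 1: sample sizes and concentration.} Running $B$ for $\Theta(G\eps^{-2})$ rounds reaches box $i$ with probability $q_i^B\ge 2\eps$. A multiplicative Chernoff bound then gives $m_i\gtrsim q_i^B\,G\eps^{-2}$ for every $i$ with high probability. The dyadic rounding leaves only $(\log T)^n$-free profiles per box to union bound over.

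I would then prove a localized uniform deviation bound over threshold policies $\pi$:
\[
\bigl|\wh V(\pi)-V(\pi)\bigr|+\sum_i \bigl|\wh q_i^\pi-q_i^\pi\bigr|\cdot(\text{weights}) \;\lesssim\; \eps\sqrt{\textstyle\sum_i q_i^\pi/(G\,q_i^B)}\;\cdot\text{polylog}.
\]
The proof telescopes $\wh V-V$ box by box. Each box-$i$ error is weighted by the reach probability $q_i^\pi$ and has variance $1/m_i\lesssim \eps^2/(G q_i^B)$. Two cases apply:
\begin{itemize}
\item For $\pi$ with $\Delta(\pi)\le 8\eps$, the hypothesis $q_i^\pi\le G q_i^B$ gives error $O(\eps)$.
\item Policies with small reach are handled by the floor $q_i^B\ge 2\eps$.
\end{itemize}
Uniformity over thresholds uses that threshold classes on a line have VC dimension $1$ per box, combined with a Freedman-type bound along the backward recursion.

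\textbf{Step 2: value of $B'$.} The baseline $\pi_0$ maximizes $\wh V$, so $\wh V(\pi_0)\ge \wh V(\pi^\star)$. Concentration at $\pi^\star$ and at $\pi_0$ then gives $\Delta(\pi_0)\le c\eps$.

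To control $\pi_0$ itself, I would bootstrap: any policy whose true gap exceeds $8\eps$ is compared through the deviation bound normalized by its own reach. Alternatively, I would show the error bound holds for all policies because $\sum_i q_i^\pi$-weighted errors telescope through $q_i^B\ge 2\eps$. This is the delicate part.

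Each mixture $\sigma_{j,z}$ has $\wh V\ge \wh V(\pi_0)-a$ by \eqref{eq:empirical-mixture-value}, with $a\asymp\eps$. The aggregation is a convex combination of these mixtures plus a small full-traversal weight $\asymp\eps$. Hence $\Delta(B')\le 8\eps$ after transferring the empirical values to true values.

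\textbf{Step 3: coverage.} Fix $\pi$ with $\Delta(\pi)\le 4\eps$ and a box $j$. The bonus policy $\pi_{j,z}$ maximizes $\wh V+z\wh q_j$, so
\[
z\,\wh q_j^{\pi_{j,z}}\ge z\,\wh q_j^\pi-\bigl(\wh V(\pi_0)-\wh V(\pi)\bigr).
\]
Choosing $z$ on a geometric grid with $z\,\wh q_j^\pi\approx$ a constant multiple of $\eps$ gives two consequences:
\begin{itemize}
\item $\wh q_j^{\pi_{j,z}}\gtrsim \wh q_j^\pi$;
\item $d_{j,z}\lesssim z$, so the mixing weight $w_{j,z}=a/(a+d_{j,z})$ keeps the score $s_{j,z}=w_{j,z}\wh q_j^{\pi_{j,z}}$ at least a constant fraction of $\wh q_j^\pi$.
\end{itemize}
Concentration converts these empirical statements back to true reach probabilities.

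The main obstacle is the aggregation. Put weight on each $(j,z)$ according to relative drops of the nonincreasing envelope $e_j=\max_{j'\ge j} s_{j',z}$ in $\log$ scale, for example weight $\propto \log(e_j/e_{j+1})$ normalized by $\log(1/\eps)$. Since reaching $j'\ge j$ implies reaching $j$, the policy then reaches box $j$ with probability at least $e_j/O(\log(n/\eps))$. This loses only a logarithmic factor, rather than the factor $n$ incurred by uniform weighting.

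Finally, the full-traversal weight gives the floor $q_i^{B'}\ge\eps$, and taking the maximum over $j$ and $z$ yields $q_i^{B'}\ge q_i^\pi/G$. The failure probability is $\delta/(2(h+1)^2)$ from the union bound inside $G$.
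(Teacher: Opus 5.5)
\textbf{Main gap: Step 1 does not deliver an $n$-free bound.} Step 1 is where all of the $n$-independence has to come from. You weight the box-$i$ error by $q_i^\pi$ and give it variance $1/m_i$. Since $q_i^\pi\le Gq_i^B$ makes each summand of $\sum_i q_i^\pi/(Gq_i^B)$ at most $1$, your radius is of order $\eps\sqrt{n/G}$ (times polylogs) whenever $\pi$ and $B$ both reach most boxes. That is not $O(\eps)$; it is exactly the $\sqrt n$ loss the proposition exists to remove. The missing observation is that the box-$i$ fluctuation $\xi_i=(\wh P_i-P_i)[a_i(r_i-J_{i+1})]$ vanishes off the stopping region, so its variance is at most $t_i^\pi/m_i$ rather than $1/m_i$. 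The reach-weighted variance is then controlled by $\sum_i q_i^\pi t_i^\pi/m_i=\sum_i(q_i^\pi-q_{i+1}^\pi)/m_i=C(\pi)$, the expected inverse count at the stopping box. The bound $m_i\ge Mq_i^\pi/(4G)$ makes this telescope to $O(G\log(1/\eps)/M)$, with no $n$.

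Your uniformity mechanism fails as well. The backward exponential-supermartingale (Freedman) argument requires the threshold at box $i$ to be a function of later boxes' samples only. A union over arbitrary threshold vectors (VC dimension one per box) costs about $n\log M$ in the exponent. In fact no bound uniform over near-optimal threshold policies can be $O(\eps)$ once $n\gtrsim m\log(1/\eps)$ with $m=\max_i m_i$. Take i.i.d.\ uniform boxes and place each $\tau_i$ just above box $i$'s sample maximum. The empirical stopping probability is $0$ at every $i<n$, so $\wh V\approx 1/2$. The true policy, however, stops before box $n$ with probability $1-O(\eps)$ at values $1-O(1/m_{\min})$, so it is $O(\eps)$-optimal. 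This policy is not triangular, which is exactly why the paper's family excludes it.

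The paper instead concentrates only over a finite family of triangular procedures (baseline, the $O(n\log(1/\eps))$ bonus policies, $F$, $\pi^\star$) and their mixtures. It takes a union over rounded profiles that exploits their monotonicity in $i$, giving at most $(n+1)^{L_M}$ profiles rather than $L_M^n$. It reaches an arbitrary $\pi$ with $\Delta(\pi)\le 4\eps$ only through fixed witness policies $\pi^{(j)}$ that nearly maximize $q_j$ among $4\eps$-optimal mixtures. Each witness is a mixture of two threshold policies by \cref{lem:two-component-witness}, so it can be added as a constant procedure. Your Step 3 invokes concentration for $\wh V(\pi)$ and $\wh q_j^\pi$ at an arbitrary such $\pi$, simultaneously for all of them, and needs exactly this reduction.

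\textbf{Two further steps need repair.} In Step 2 you rightly sense that $\pi_0$ and the $\sigma_{j,z}$ are not known a priori to lie in the region where concentration holds. The bound genuinely fails outside that region, so ``holds for all policies'' is false, and your bootstrap is left unspecified. For instance, if $q_n^B\approx 2\eps$ then $m_n\approx\eps M$, and $\wh V(F)-V(F)$ fluctuates at scale $(\eps M)^{-1/2}\asymp\sqrt{\eps}/A^2$, far above $\eps$ at fine scales.

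The paper's device (\cref{lem:localization}) is convexity. Suppose $\mu$ in the mixing-closed class $\cC$ had $\wh V(\mu)\ge \wh V(\pi^\star)-6\eps$ but $\Delta(\mu)>7\eps$. Then $\nu=(1-\lambda)\pi^\star+\lambda\mu$ with $\Delta(\nu)=7\eps$ lies in the concentration region, and linearity of $\wh V$ gives $7\eps\le 6\lambda\eps+\eps<7\eps$. This is why $\pi^\star$ is placed in the family.

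In Step 3, your two bullets do not give a constant fraction. With $r=\wh q_j^\pi$ and $z\approx 2a/r$, the separate bounds $\wh q_j^{\pi_{j,z}}\gtrsim r$ and $d_{j,z}\lesssim z$ only yield $s_{j,z}\gtrsim \frac{a}{a+z}\,r$, which is of order $r^2$. You need the coupled inequality $z\wh q_j^{\pi_{j,z}}\ge d_{j,z}+a$. It follows from optimality of $\pi_{j,z}$ against a competitor with $\wh V\ge v_0-a$ and $zr\ge 2a$, and gives $s_{j,z}=\frac{a}{a+d_{j,z}}\wh q_j^{\pi_{j,z}}\ge a/z\ge r/4$. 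Your log-ratio aggregation weights are fine and essentially equivalent to the paper's relative drops.
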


For the purpose of reuse, we define the input condition as follows:

\begin{definition}[Condition 1]
\label{input_condition}
Given $G=64(1+\log(nT/\delta))$. For a phase $h$ with input policy $B$ and parameter $\eps$, $\Delta(B)\le16\eps$ and $q_i^B \geq \max\{\frac{q_i^\pi}{G},2\eps\}\text{ for every policy $\pi$ with $\Delta(\pi)\le8\eps$}$ and $\forall i \in [n].$
\end{definition}

For full horizon algorithm, the algorithm proceeds in phases at geometrically decreasing accuracy
scales. At each phase, the current behavior policy $B$ is held fixed for
$M_\eps$ rounds to collect the observations needed to construct the
next behavior policy. After a successful phase, the algorithm replaces
$B$ with the phase output $B'$ and halves $\eps$. Thus, the schedule
uses smaller batches for coarse updates early on and larger batches
for finer updates later. In this way, the total regret could be controlled. See \cref{alg:phase} and \cref{alg:full-horizon} for details.

\begin{algorithm}[htbp]
\small
\caption{One phase at scale $\eps$}
\label{alg:phase}
\Input{Frozen behavior $B$; Error parameter $\eps$}
\Output{Next behavior $B'$ or FAIL}
 Define $A=1+\log(nT/\delta)$, $M_\eps=\left\lceil C_M A^4/\eps^2\right\rceil$,
  $a=6\eps$, $\cZ_\eps=\{\eps,2\eps,4\eps,\ldots,1\}$\\
Play $B$ for $M_\eps$ rounds; record all observed prefixes and counts $N_i$ for box $i$\;
\If{$\min_{i\in[n]}N_i=0$}{
  \Return{$\mathrm{FAIL}$}\;
}
Set $m_i=2^{\lfloor\log_2N_i\rfloor}$; Use the first $m_i$ observations to construct $\wh D_i$\;
$\pi_0\gets \argmax_\pi\wh V(\pi)$ by backward induction; $v_0\gets\wh V(\pi_0)$; $\sigma_1\gets\pi_0$\;
\For{$j=2,\ldots,n$}{
 \ForEach{$z\in\cZ_\eps$}{
  $ \pi_{j,z}\gets\argmax_\pi
    \{\wh V(\pi)+z\wh q_j^\pi\}$ by backward induction\;
  $d_{j,z}\gets v_0-\wh V(\pi_{j,z})$; $w_{j,z}\gets a/(a+d_{j,z})$\;
  $s_{j,z}\gets w_{j,z}\wh q_j^{\pi_{j,z}}$\;
 }
 Choose any $z_j$ maximizing $s_{j,z}$.\\
 $s_j\gets s_{j,z_j}$; $\sigma_j\gets(1-w_{j,z_j})\pi_0+w_{j,z_j}\pi_{j,z_j}$\;
}
$R_{n+1}\gets\eps$\;
\For{$i=n,n-1,\ldots,2$}{$R_i\gets\max\{s_i,R_{i+1}\}$\;}
$R_1\gets1$; $\alpha_i\gets(R_i-R_{i+1})/R_i$; $Z\gets\sum_i\alpha_i$\;
\Return{$\eps F+(1-\eps)Z^{-1}\sum_i\alpha_i\sigma_i$}\;
\end{algorithm}







\begin{algorithm}[htbp]
\caption{Full-horizon schedule}
\label{alg:full-horizon}
\Input{Horizon $T$}
$B\gets F$, $\eps\gets1/8$, $r\gets T$; $M_\eps\gets\left\lceil C_M A^4/\eps^2\right\rceil$\;
\While{$M_\eps\le r$}{
$B'\gets$ Algorithm~\ref{alg:phase}$(B,\eps)$\;
$r\gets r-M_\eps$\;
\lIf{$B'=\mathrm{FAIL}$}{\textbf{break}}
$B\gets B'$; $\eps\gets\eps/2$\;
}
Play $B$ for the remaining $r$ rounds.\;
\end{algorithm}



\section{Three Deterministic Lemmas}
\label{sec:deterministic}

In this section, we prove three deterministic lemmas that is useful to prove our main theorem. The following lemma shows empirical near-optimality of the constructed policy implies true near-optimality at each phase.

\begin{lemma}[Empirical optimal implies true optimal]
\label{lem:localization}
Let $\cC$ be closed under mixing and contain an optimal policy $\pi^\star$. Fix $\eps>0$. Assume  $|\wh V(\rho)-V(\rho)|\le \frac{\eps}{2}$
holds simultaneously for every $\rho\in\cC$ with
$\Delta(\rho)\le 8\eps$. Then every policy $\mu\in\cC$ satisfying
$\wh V(\mu)\ge \wh V(\pi^\star)-6\eps$ has $\Delta(\mu)\le 7\eps$.
\end{lemma}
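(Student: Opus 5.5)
The plan is a localization argument that uses closedness under mixing to move along the segment from $\pi^\star$ to $\mu$. This lets us reach a policy whose gap is small enough for the concentration hypothesis to apply, while still being far from optimal in true value.

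Suppose for contradiction that $\Delta(\mu)>7\eps$. If $\Delta(\mu)\le 8\eps$, the hypothesis applies to $\mu$ and to $\pi^\star$, which has gap $0$. Then
\[
V(\mu)\ge \wh V(\mu)-\tfrac{\eps}{2}\ge \wh V(\pi^\star)-6\eps-\tfrac{\eps}{2}\ge V(\pi^\star)-7\eps ,
\]
so $\Delta(\mu)\le 7\eps$, a contradiction. So it remains to rule out $\Delta(\mu)>8\eps$, where the hypothesis cannot be applied to $\mu$ directly.

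For this case we consider the mixtures $\rho_\lambda=(1-\lambda)\pi^\star+\lambda\mu\in\cC$ for $\lambda\in[0,1]$. Both $V$ and $\wh V$ are linear in the mixing weight, since a mixture's value is the weighted average of its components' values under either distribution. Hence $\Delta(\rho_\lambda)=\lambda\Delta(\mu)$, and
\[
\wh V(\rho_\lambda)=(1-\lambda)\wh V(\pi^\star)+\lambda \wh V(\mu)\ge \wh V(\pi^\star)-6\lambda\eps\ge \wh V(\pi^\star)-6\eps .
\]
Choose $\lambda=8\eps/\Delta(\mu)\in(0,1)$, so that $\Delta(\rho_\lambda)=8\eps$ and the hypothesis applies to $\rho_\lambda$. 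Applying the previous paragraph's chain to $\rho_\lambda$ gives $\Delta(\rho_\lambda)\le 7\eps$, contradicting $\Delta(\rho_\lambda)=8\eps$. Therefore $\Delta(\mu)\le 8\eps$, and the first case finishes the proof.

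The only delicate step is this interpolation. It requires that mixing is linear for both the true and the empirical value, which holds by the definition of mixtures in \cref{sec:model} and the corresponding definition of $\wh V$. It also requires $\cC$ to be closed under mixing, which the lemma assumes. The remaining work is bookkeeping of constants: the budget $6\eps+\eps/2+\eps/2=7\eps$ is strictly below the localization radius $8\eps$, and this strict gap is exactly what makes the contradiction go through.
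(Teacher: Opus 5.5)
Your proof is correct and follows the paper's argument. Both argue by contradiction: mix $(1-\lambda)\pi^\star+\lambda\mu$ using closure under mixing and linearity of $V$ and $\wh V$, then apply the accuracy hypothesis to the mixture and to $\pi^\star$. The only difference is minor: the paper sets the mixture's gap to exactly $7\eps$ and gets strictness from $6\lambda\eps+\eps<7\eps$ since $\lambda<1$, whereas you set it to $8\eps$ and handle $7\eps<\Delta(\mu)\le 8\eps$ as a separate case, using the slack between $7\eps$ and $8\eps$ instead.
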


\begin{proof}
Fix such a policy $\mu$ and suppose, for a contradiction, that
$\Delta(\mu)>7\eps$. Set $\lambda=\frac{7\eps}{\Delta(\mu)}\in(0,1), \nu=(1-\lambda)\pi^\star+\lambda\mu$.
By closure under mixing, $\nu\in\cC$. Linearity gives $\Delta(\nu)=\lambda\Delta(\mu)=7\eps$. Thus the accuracy assumption $|\wh V(\rho)-V(\rho)|\le \frac{\eps}{2}$ applies to both policy $\nu$ and $\pi^\star$.
Moreover, empirical linearity gives $\wh V(\pi^\star)-\wh V(\nu)
    =\lambda\bigl(\wh V(\pi^\star)-\wh V(\mu)\bigr)
    \le 6\lambda\eps$.
Combining these bounds yields $7\eps
    =V(\pi^\star)-V(\nu)
    \le \wh V(\pi^\star)-\wh V(\nu)+\eps
    \le 6\lambda\eps+\eps
    <7\eps$, where the strict inequality follows from $\lambda<1$. This contradiction proves the claim.
\end{proof}

Next two lemmas show the exploration property of the constructed policy.

\begin{lemma}[Baseline mixing preserves exploration]
\label{lem:bonus}
Let $a=6\eps$ and use the dyadic bonus grid
$\cZ_\eps$. If a policy $\pi^{\rm}$ satisfies $\wh V(\pi^{\rm })\ge v_0-a$ and $\wh q_j^{\pi^{\rm}}\ge4a$,
then $\max_{z\in\cZ_\eps}s_{j,z}
  \ge \frac14\,\wh q_j^{\pi^{\rm}}$.
\end{lemma}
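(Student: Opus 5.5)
The plan is to exhibit one specific grid point $z\in\cZ_\eps$ with $s_{j,z}\ge \frac14 p$, where $p:=\wh q_j^{\pi}$, and to bound $s_{j,z}$ using only the optimality of $\pi_{j,z}$ for the bonus objective and the optimality of $\pi_0$ for $\wh V$. Write $Q:=\wh q_j^{\pi_{j,z}}$ and $d:=d_{j,z}$. Since $\pi_0$ maximizes $\wh V$, we have $d\ge0$, so $w_{j,z}=a/(a+d)\in(0,1]$ is well defined.

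\textbf{Step 1 (choice of $z$).} Take $z\in\cZ_\eps$ with $2a/p\le z\le 4a/p$. Such a point exists because $\cZ_\eps$ is a doubling grid from $\eps$ to $1$. (In the schedule, $\eps=2^{-k}$, so $1$ lies on the grid.) The lower end satisfies $2a/p=12\eps/p\ge 12\eps\ge\eps$. The upper end satisfies $4a/p\le1$ exactly because of the hypothesis $p\ge4a$. A dyadic interval of ratio $2$ lying inside $[\eps,1]$ contains a grid point. Consequently $zp\ge 2a$ and $a/z\ge p/4$.

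\textbf{Step 2 (two inequalities from optimality).} Since $\pi_{j,z}$ maximizes $\wh V+z\wh q_j$, comparing it with $\pi$ gives
\[
\wh V(\pi_{j,z})+zQ\ \ge\ \wh V(\pi)+zp\ \ge\ v_0-a+zp .
\]
Rearranging, $d\le a+z(Q-p)$. Because $d\ge0$, this also yields $Q\ge p-a/z$.

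\textbf{Step 3 (case split on $Q$ versus $p$).}
\begin{itemize}
\item If $Q\le p$, then $d\le a$, so $w_{j,z}\ge\frac12$. Moreover $Q\ge p-a/z\ge p-p/2=p/2$, using $z\ge2a/p$. Hence $s_{j,z}=w_{j,z}Q\ge p/4$.
\item If $Q>p$, then $a+d\le 2a-zp+zQ\le zQ$, using $zp\ge2a$. Hence $s_{j,z}=aQ/(a+d)\ge a/z\ge p/4$, using $z\le4a/p$.
\end{itemize}
In both cases $\max_{z\in\cZ_\eps}s_{j,z}\ge s_{j,z}\ge\frac14 p$, which is the claim.

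The only delicate point is Step 1: the grid must actually contain a point in $[2a/p,4a/p]$. This is where both the hypothesis $\wh q_j^{\pi}\ge 4a$ and the endpoints of $\cZ_\eps$ are used. The case analysis in Step 3 is then routine algebra.
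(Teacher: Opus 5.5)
Your proof is correct and follows the paper's argument: you choose the same dyadic grid point $z\in[2a/p,4a/p]$ and compare $\pi_{j,z}$ against $\pi$ under the bonus objective in the same way. The case split in Step 3 is unnecessary, because the bound $a+d\le 2a+z(Q-p)\le zQ$ in your second case uses only $zp\ge 2a$ and never $Q>p$; this is how the paper concludes $s_{j,z}\ge a/z\ge p/4$ directly.
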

\begin{proof}
Set $r=\wh q_j^{\pi^{\rm}}$. By assumption,
$4a\le r\le1$. Since $a=6\eps$, we have $\eps\le\frac{2a}{r}\le\frac12$.
Choose the smallest grid point $z\in\cZ_\eps$ satisfying
$z\ge2a/r$. Such a point exists, and the dyadic spacing gives $\frac{2a}{r}\le z\le\frac{4a}{r}$.

Write $d=v_0-\wh V(\pi_{j,z})\ge0, q=\wh q_j^{\pi_{j,z}}$.
Optimality of $\pi_{j,z}$ implies  $v_0-d+zq \ge \wh V(\pi^{\rm})+z\wh q_j^{\pi^{\rm}}
  \ge v_0-a+zr$. Because $zr\ge2a$, it follows that
$zq\ge d+zr-a\ge d+a$.
Therefore, the exploration score after baseline mixing satisfies $s_{j,z}
  =\frac{a}{a+d}\,q
  \ge\frac{a}{a+d}\,\frac{a+d}{z}
  =\frac az
  \ge\frac r4$.
Taking the maximum over the bonus grid yields
$\max_{z\in\cZ_\eps}s_{j,z}
  \ge\frac14\,\wh q_j^{\pi^{\rm}}$.
\end{proof}

\begin{lemma}[Aggregating explorers preserves exploration]
\label{lem:drop}
Let $\eps\in (0,1/8]$. Suppose that, $0\le s_j\le1$ and $q_j^{\sigma_j}\ge s_j-\frac{\eps}{16}\; \forall j$. Then the relative-drop mixture $B'$ is well defined and satisfies $q_k^{B'} \ge \max\left\{\eps,\frac{R_k}{2\log(1/\eps)}\right\}\; \forall k$.
\end{lemma}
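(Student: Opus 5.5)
The plan is a telescoping argument over the nonincreasing envelope $R$, in four steps: well-definedness, an upper bound on $Z$, a lower bound on the mixed reach probability, and a case split. Throughout, $\log$ is the natural logarithm and $L=\log(1/\eps)\ge\log 8>2$.

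\textbf{Step 1: well-definedness.} By construction $R_{n+1}=\eps$, $R_i=\max\{s_i,R_{i+1}\}$ for $2\le i\le n$, and $R_1=1$. Since $s_i\le1$, the sequence is nonincreasing, $1=R_1\ge R_2\ge\cdots\ge R_{n+1}=\eps>0$. Hence every $\alpha_i=(R_i-R_{i+1})/R_i$ lies in $[0,1]$ and is well defined. The telescoping product gives
\[
\prod_{i=1}^n(1-\alpha_i)=\prod_{i=1}^n\frac{R_{i+1}}{R_i}=\frac{R_{n+1}}{R_1}=\eps<1,
\]
so some $\alpha_i>0$ and therefore $Z>0$. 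Thus $B'$ is a genuine mixture.

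\textbf{Step 2: upper bound on $Z$.} Using $1-x\le e^{-x}$ in the product above gives $\eps\le e^{-Z}$, that is, $Z\le L$. This is exactly where the $\log(1/\eps)$ in the statement comes from, rather than a factor of $n$.

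\textbf{Step 3: lower bound on reach.} The full-traversal component alone gives $q_k^{B'}\ge\eps q_k^F=\eps$ for every $k$. For $k=1$ the claim is trivial, since $q_1^{B'}=1$, so fix $k\ge2$. Reach probabilities are nonincreasing in the box index, so for every $i\ge k$ we have $q_k^{\sigma_i}\ge q_i^{\sigma_i}\ge s_i-\eps/16$. This is the nesting of prefixes: reaching a later box also reaches box $k$. Hence
\[
q_k^{B'}\ge\frac{1-\eps}{Z}\sum_{i=k}^n\alpha_i\Bigl(s_i-\frac{\eps}{16}\Bigr).
\]
Whenever $\alpha_i>0$ with $2\le i\le n$, we have $R_i>R_{i+1}$, which forces $R_i=s_i$. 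Therefore
\[
\alpha_i\Bigl(s_i-\frac{\eps}{16}\Bigr)=(R_i-R_{i+1})\Bigl(1-\frac{\eps}{16R_i}\Bigr)\ge\frac{15}{16}(R_i-R_{i+1}),
\]
using $R_i\ge\eps$. Terms with $\alpha_i=0$ contribute zero to both sides. Telescoping from $i=k$ to $n$ and applying Step 2 together with $1-\eps\ge7/8$ yields
\[
q_k^{B'}\ge\frac{7}{8}\cdot\frac{15}{16}\cdot\frac{R_k-\eps}{L}.
\]

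\textbf{Step 4: case split.} If $R_k\le2\eps L$, then the bound $q_k^{B'}\ge\eps\ge R_k/(2L)$ already suffices. Otherwise $R_k>2\eps L>4\eps$, so $R_k-\eps\ge\tfrac34R_k$. Step 3 then gives $q_k^{B'}\ge\tfrac{315}{512}R_k/L\ge R_k/(2L)$. In both cases $q_k^{B'}\ge\max\{\eps,R_k/(2L)\}$, as claimed.

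The main obstacle is Step 3, specifically the identification $\alpha_i s_i=R_i-R_{i+1}$ on the support of $\alpha$, which makes the relative drops telescope despite the $\eps/16$ slack. A second delicate point is recognizing that the normalization $Z$ is controlled by $\log(1/\eps)$ through the product identity in Steps 1 and 2. The constants only work because $L>2$ for $\eps\le1/8$, which requires the natural logarithm.
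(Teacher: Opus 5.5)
Your proof is correct and follows the paper's argument: the support identity $\alpha_iR_i=R_i-R_{i+1}$, the prefix nesting $q_k^{\sigma_i}\ge q_i^{\sigma_i}$, telescoping, and the bound $Z\le\log(1/\eps)$. Your product bound $\eps=\prod_i(1-\alpha_i)\le e^{-Z}$ is the same inequality as the paper's $1-R_{i+1}/R_i\le\log(R_i/R_{i+1})$. The only real difference is at the end: the paper keeps the full-traversal mass $\eps$ and uses $Z\ge1-\eps\ge\tfrac12$ to get $\eps+\frac{R_k-\eps}{2Z}\ge\frac{R_k}{2Z}$ in one line, whereas you drop that term and recover with a case split on $R_k\le2\eps\log(1/\eps)$, which is equally valid.
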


\begin{proof}
By construction, $1=R_1\ge R_2\ge\cdots\ge R_{n+1}=\eps$.
Hence the normalizing constant satisfies
\[
  1-\eps
  \le Z
  =\sum_{i=1}^n\frac{R_i-R_{i+1}}{R_i}
  \le\sum_{i=1}^n\log\frac{R_i}{R_{i+1}}
  =\log(1/\eps).
\]
In particular, $Z>0$, so $B'$ is well defined.

If $\alpha_i>0$ and $i\ge2$, then $R_i>R_{i+1}$, and
the envelope definition gives $R_i=s_i$. Therefore,
\[
  (1-\eps)q_i^{\sigma_i}
  \ge(1-\eps)(R_i-\eps/16)
  \ge\frac{15}{16}(1-\eps)R_i
  \ge\frac{R_i}{2},
\]
where we used $R_i\ge\eps$ and $\eps\le1/8$.
The same bound holds for $i=1$, since
$q_1^{\sigma_1}=R_1=1$.

Fix a box $k$. Reaching a box $i\ge k$ entails reaching
$k$, so $q_k^{\sigma_i}\ge q_i^{\sigma_i}$. Consequently,
\[
\begin{aligned}
  q_k^{B'}
  &=\eps+\frac{1-\eps}{Z}
       \sum_{i=1}^n\alpha_iq_k^{\sigma_i}\ge\eps+\frac{1}{2Z}
       \sum_{i=k}^n\alpha_iR_i\\
  &=\eps+\frac{R_k-\eps}{2Z}\ge\frac{R_k}{2Z}
  \ge\frac{R_k}{2\log(1/\eps)}.
\end{aligned}
\]
Here the second equality follows from
$\alpha_iR_i=R_i-R_{i+1}$ and telescoping, while the
following inequality uses $Z\ge1-\eps\ge1/2$. Finally, the full-traversal component gives
$q_k^{B'}\ge\eps$, completing the proof.
\end{proof}

\section{Concentration Bound}
\label{sec:statistics}
In this section, we will provide a concentration bound at each phase. Before we describe our main lemma, we need to introduce a few notations.

\subsection{Notations}

We first introduce some definitions for the analysis. A \emph{profile} is the vector
$\mathbf m=(m_1,\ldots,m_n)$ of retained sample counts.
For a fixed deterministic profile, write $\mathcal S_i=(Y_{i,1},\ldots,Y_{i,m_i})$ and $\mathcal S=(\mathcal S_1,\ldots,\mathcal S_n)$. A \emph{procedure} is a fixed rule
that maps these sample arrays to a deterministic threshold policy. Next, we introduce \emph{triangular procedure}.

\begin{definition}[Triangular procedure]
For a fixed deterministic profile, a procedure
generating a threshold policy is \emph{triangular} if, for each
$i<n$, its threshold $\tau_i$ depends only on the sample arrays
$\mathcal S_{i+1},\ldots,\mathcal S_n$ and information fixed
before the current batch. It cannot use samples from box $i$
or earlier boxes to choose that threshold.
\end{definition}
We can see threshold policy calculated by backward induction satisfies triangular procedure. Next, we introduce \emph{witness policy}, which plays an essential in our analysis. Witness policy for box $j$, $\pi^{(j)}$, approximately maximizes the probability of
reaching box $j$ among all $4\eps$-near-optimal threshold mixtures.

\begin{definition}[Witness policy]
    For each box $j\in[n]$, a policy
$\pi^{(j)}\in\Pi$ satisfying $\Delta(\pi^{(j)})\le4\eps$ and $q_j^{\pi^{(j)}} \ge \sup_{\substack{\mu\in\Pi\\\Delta(\mu)\le4\eps}}
  q_j^\mu-\frac{\eps}{16}$ is called witness policy.
\end{definition}

Any witness policy can be chosen by the mixture of two threshold policy by the following lemma.

\begin{lemma}
\label{lem:two-component-witness}
For every $\mu\in\Pi$ and $j\in[n]$, there is a mixture $\nu$ of at most two threshold policies $\rho_1,\rho_2 \in \Pi$ satisfying $V(\nu)=V(\mu)$ and $q_j^\nu\ge q_j^\mu$.
\end{lemma}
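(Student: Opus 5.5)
Write $\mu=\sum_{k=1}^K w_k\pi_k$ with each $\pi_k$ a threshold policy and $w_k>0$. Associate to each component the point $p_k=(V(\pi_k),q_j^{\pi_k})\in\mathbb R^2$. By linearity of $V$ and $q_j$ under initial mixing, $\mu$ corresponds to the convex combination $p_\mu=\sum_k w_k p_k=(V(\mu),q_j^\mu)$. The task is therefore planar. We want a point with the same first coordinate and at least the same second coordinate, expressed as a convex combination of at most two of the $p_k$.

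Consider the set of points $(v,y)$ with $v=V(\mu)$ and $y$ obtained from convex combinations of the $p_k$. Let $y^\ast=\max\{y:(V(\mu),y)\in\mathrm{conv}\{p_1,\dots,p_K\}\}$. This is a linear program in the weights with two equality constraints: $\sum_k\lambda_k=1$ and $\sum_k\lambda_kV(\pi_k)=V(\mu)$. The LP is feasible, since $\lambda=w$ works, and it is bounded, since $y\le 1$. It therefore has an optimal basic feasible solution. A basic feasible solution of an LP with two equality constraints has at most two nonzero coordinates. Take $\nu=\lambda_{k_1}\pi_{k_1}+\lambda_{k_2}\pi_{k_2}$ from such a solution. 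Then $V(\nu)=V(\mu)$ by the constraint, and $q_j^\nu=y^\ast\ge q_j^\mu$ because $w$ is feasible.

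As an alternative, more elementary route, split the indices into those with $V(\pi_k)\ge V(\mu)$ and those with $V(\pi_k)<V(\mu)$. Among pairs $(k,l)$ from opposite sides (or any single $k$ with $V(\pi_k)=V(\mu)$), the weight $\lambda$ with $\lambda V(\pi_k)+(1-\lambda)V(\pi_l)=V(\mu)$ is determined. We would then show that $\mu$'s weights decompose as a convex combination of these pairwise mixtures, so that $q_j^\mu$ is an average of the pairwise $q_j$ values and one pair attains at least the average. This is the standard Carathéodory-type step, and I regard it as the only real obstacle. The LP/basic-solution argument handles it cleanly, so I would use that.

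Finally, note that each $\pi_k$ is a threshold policy. Hence $\rho_1=\pi_{k_1}$ and $\rho_2=\pi_{k_2}$ lie in $\Pi$, and $\nu\in\Pi$ as a finite initial mixture. The degenerate case of a single component, when the basic solution has only one nonzero coordinate, is allowed by ``at most two''.
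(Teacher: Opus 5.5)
Your proof is correct and rests on the same fact as the paper's: with only the two linear constraints $\sum_k\lambda_k=1$ and $\sum_k\lambda_kV(\pi_k)=V(\mu)$ on nonnegative weights, basic (extreme) feasible points have at most two nonzero weights. The paper proves this by hand: it moves the weights along a nonzero null vector of the constraint matrix, signed so that $q_j$ does not decrease, until some weight vanishes, and then iterates. That is exactly the standard argument behind the existence of the optimal basic feasible solution you invoke, so the two routes differ only in packaging; yours additionally gives the largest attainable $q_j$, which the lemma does not need.
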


\begin{proof}
Write $\mu=\sum_{k=1}^K w_k\rho_k$ with any $w_k>0$.
If $K\le2$, take $\nu=\mu$. Otherwise, since $K>2$, there exists a nonzero vector $c$ with $\sum_k c_k=0$ and $\sum_k c_kV(\rho_k)=0$. Choose its sign so that $\sum_k c_kq_j^{\rho_k}\ge0$. Replace each weight $w_k$ by $w_k+tc_k$, increasing $t$ from
zero until a weight first reaches zero. Such a time exists because $c$ has a negative coordinate. The new weights remain nonnegative and sum to one. By linearity, this adjustment
preserves value and does not decrease reach to box $j$. Remove the zero-weight components and repeat until at most two components remain. Their mixture is the desired $\nu$.
\end{proof}
By \cref{lem:two-component-witness}, we may choose $\pi^{(j)}=\lambda_j\rho_{j,1}+(1-\lambda_j)\rho_{j,2}$, $\rho_{j,1},\rho_{j,2}\in\Pi$ and $\lambda_j\in[0,1]$. Let $\pi_0$ denote the empirical baseline and let $\pi_{j,z}$
denote the bonus policy for a fixed target $j$ and bonus
$z\in\cZ_\eps$. Let $\mathfrak P$ be the finite family consisting of the procedures that
generate $\pi_0$ and all $\pi_{j,z}$, together with the constant procedures
that return $F$, $\pi^\star$, and the witnesses decomposition $\rho_{j,1},\rho_{j,2}$. We can see $|\mathfrak P|
\le O(n[1+\log(1/\eps)])$. Then define $\cC=\operatorname{conv}\!\left(
  \{\pi_0,F,\pi^\star\}
  \cup
  \{\pi_{j,z}:2\le j\le n,\ z\in\cZ_\eps\}
  \cup
  \{\rho_{j,1},\rho_{j,2}:j\in[n]\}
\right)$, where convex combinations are interpreted as initial mixtures. It is closed under mixing with $\pi^\star$. 

For a fixed profile and a mixture
$\mu\in\Pi$, let $S_\mu\in[n]$ denote its stopping box.
Define the analysis-only quantities $C(\mu)=\E_D\!\left[\frac{1}{m_{S_\mu}}\right]$, $\wh C(\mu)=\E_{\wh D}\!\left[\frac{1}{m_{S_\mu}}\right]$, $m_{\min}=\min_{i\in[n]}m_i$. Equivalently, with $q_{n+1}^\mu=\wh q_{n+1}^\mu=0$, $C(\mu)=\sum_{i=1}^n
\frac{q_i^\mu-q_{i+1}^\mu}{m_i}$ and $\wh C(\mu)=\sum_{i=1}^n
\frac{\wh q_i^\mu-\wh q_{i+1}^\mu}{m_i}$. These quantities average the inverse sample count at the stopping box. They are larger when a policy is more likely to stop at boxes with fewer retained observations. 

In this paragraph, we introduce stopping payoff function. A \emph{stopping-payoff function} $r:[n]\times[0,1]\to[0,1]$,
written $r_i(x)=r(i,x)$, assigns score $r_i(x)$ when a policy
stops at box $i$ with value $x$. Its expected payoff is
$J^r(\mu)=\E_D[r_{S_\mu}(X_{S_\mu})]$.
For a fixed profile, we use the value payoff
$r_i(x)=x$, the $n$ reach payoffs
$r_i^{(j)}(x)=\ind\{i\ge j\}$ for $j\in[n]$, and the
inverse-count payoff $r_i(x)=\frac{m_{\min}}{m_i}$.
All $n+2$ stopping-payoff functions take values in $[0,1]$. Their expected payoffs are
$V(\mu)$, $q_j^\mu$, and $m_{\min}C(\mu)$, respectively,
with analogous identities under the empirical distribution.

Last, we set value for parameters used in this section: $G=64A$, $\eta=\frac{\eps}{16}$, $\ell=C_\ell A^2$, $\theta=\frac{64\ell}{\eps}$, $\beta=\frac{\eps}{64}$.

\subsection{Main concentration lemma}
After introducing all the necessary notations and definitions, now, we are ready to introduce the main concentration lemma, which provides accuracy guarantee for $\wh{V}(\pi)$ and $\wh{q}^{\pi}_j$ at each phase.
\begin{theorem}[Local accuracy]
\label{prop:good}
Condition on the history before phase $h$ with accuracy parameter $\eps$, and
suppose Condition 1 holds. Set $\delta_h=\frac{\delta}{2(h+1)^2}$.  With conditional probability at least $1-\delta_h$, the phase
succeeds and
\begin{equation}
  |\wh V(\mu)-V(\mu)|\le\frac{\eps_h}{16},
  \qquad
  |\wh q_j^\mu-q_j^\mu|\le\frac{\eps_h}{16},
  \label{eq:local-accuracy}
\end{equation}
simultaneously for every $\mu\in\cC$ satisfying
$\Delta(\mu)\le8\eps_h$ and every box $j$.
\end{theorem}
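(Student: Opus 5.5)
We first make the sample counts deterministic and then prove the two estimates in \eqref{eq:local-accuracy} by a localized, variance-sensitive concentration argument over the finite procedure family $\mathfrak P$.

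\textbf{Step 1: success and sample counts.} Under Condition 1, $B$ reaches each box $i$ with probability $q_i^B\ge\max\{q_i^\pi/G,2\eps\}$ for every $\pi$ with $\Delta(\pi)\le8\eps$. The $M_\eps=\lceil C_MA^4/\eps^2\rceil$ rounds are i.i.d. given the history. A multiplicative Chernoff bound plus a union bound over $n$ boxes therefore gives
\[
N_i\ge \tfrac12 M_\eps q_i^B
\]
for all $i$, with probability $1-\delta_h/4$. In particular $N_i\ge M_\eps\eps\ge1$, so the phase succeeds, and
\[
m_i\ge \tfrac14 M_\eps\max\{q_i^\pi/G,\,2\eps\}.
\]
The profile is random, but the dyadic rounding leaves at most $(1+\log_2 M_\eps)^n$ profiles. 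Since that count is exponential in $n$, we will not union-bound over profiles jointly. Instead we condition on the profile. Given the counts, the retained samples at box $i$ are i.i.d. from $D_i$, because whether a box is observed depends only on earlier boxes' values, and independence across boxes is preserved. Concretely, we realize the samples via a ``tape'' model: box $i$ has an infinite i.i.d. tape, and the first $m_i$ entries are used. The union bound then runs over the $O(\log M_\eps)$ possible values of each $m_i$ per box, which costs only a factor $\log(nT/\delta)$ inside $A$.

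\textbf{Step 2: per-procedure concentration via triangularity.} Fix a procedure $P\in\mathfrak P$ and a payoff $r$ among the $n+2$ stopping-payoff functions. For a triangular procedure, telescoping the backward-induction recursion gives
\[
\wh J^r(\pi)-J^r(\pi)=\sum_i \wh q_i^{\pi}\,\bigl(\E_{\wh D_i}-\E_{D_i}\bigr)\bigl[a_i r_i+(1-a_i)W_{i+1}\bigr],
\]
where $W_{i+1}$ is the true continuation value. Since $\tau_i$ and $W_{i+1}$ are determined by the tapes of boxes $i+1,\dots,n$, each summand is a martingale difference when boxes are revealed in the order $n,n-1,\dots,1$. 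Freedman's inequality then bounds the sum by
\[
O\Bigl(\sqrt{\ell\,\textstyle\sum_i \wh q_i\,\tfrac{\wh q_i-\wh q_{i+1}}{m_i}}+\tfrac{\ell}{m_{\min}}\Bigr)=O\bigl(\sqrt{\ell\,\wh C(\pi)}+\ell/m_{\min}\bigr),
\]
using $\ell=C_\ell A^2$. A conditional variance bounded by the stopping mass gives the $\wh C$ form; we apply this to the inverse-count payoff as well, so that $\wh C$ and $C$ are comparable up to constants. For mixtures in $\cC$ everything is linear, so the bounds extend to $\cC$ by convexity after a union bound over $|\mathfrak P|(n+2)$ pairs.

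\textbf{Step 3: localization (the main obstacle).} We must show $C(\mu)\le \eps^2/(C\ell)$ for every $\mu\in\cC$ with $\Delta(\mu)\le 8\eps$. Step 1 gives
\[
C(\mu)\le\sum_i\frac{q_i^\mu-q_{i+1}^\mu}{m_i}\le \frac{4G}{M_\eps}\sum_i\frac{q_i^\mu-q_{i+1}^\mu}{q_i^\mu}.
\]
The sum $\sum_i (q_i^\mu-q_{i+1}^\mu)/q_i^\mu$ is a hazard sum, and it can be as large as $\log(1/\eps)$ only if truncated where $q_i\ge 2\eps$. Boxes with $q_i^\mu<2\eps$ are handled by the floor $m_i\ge M_\eps\eps/2$ together with the bounded remaining mass. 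This gives $C(\mu)\lesssim GA/M_\eps$, which is $\le\eps^2/(C\ell)$ by the choice $M_\eps\propto A^4/\eps^2$.

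The difficulty is that Condition 1 covers only policies with $\Delta\le8\eps$, while the estimated policies' gaps are unknown a priori. We resolve this circularity with a peeling/continuity argument along the segment from $\pi^\star$, in the same way as \cref{lem:localization}. On the event of Step 2, a policy with small $C$ has error at most $\eps/16$. Suppose some $\mu\in\cC$ with $\Delta(\mu)\le8\eps$ violated the bound. Mixing $\mu$ with $\pi^\star$ keeps the covering hypothesis valid along the whole segment, so the bound holds there and there is no first violation. The concentration error is then $O(\sqrt{\ell\eps^2/(C\ell)})+O(\ell/(M_\eps\eps))\le\eps/16$ for $C_M,C$ large. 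Combining the failure probabilities of Steps 1 and 2 yields total failure probability at most $\delta_h$.
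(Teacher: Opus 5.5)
Your architecture is the paper's: a tape model, telescoping against true continuations, and localization through Condition~1 plus the inverse-count payoff. However, two steps fail as written.

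\textbf{Profiles.} It is false that, given the counts, the retained box-$i$ samples are i.i.d.\ from $D_i$. The count $N_{i+1}$ records which box-$i$ values fell below $B$'s thresholds; with $n=2$ and a threshold policy $B$, conditioning on $N_2$ fixes how many box-$1$ observations lie below $\tau_1$. The tape model is the right device, but it must be paired with a union over \emph{deterministic} profiles. A per-coordinate union over the $O(\log M_\eps)$ values of each $m_i$ is not enough, because the concentration event depends on the whole vector $(m_1,\ldots,m_n)$:
\begin{itemize}
\item $\xi_i$ uses $m_i$;
\item $\tau_i$ and $J_{i+1}$ are computed from $\mathcal S_{i+1},\ldots,\mathcal S_n$;
\item $m_{i+1}$ is itself a function of the tapes of boxes $1,\ldots,i$.
\end{itemize}
The observation you are missing is that $N_{i+1}\le N_i$, since reaching box $i+1$ requires reaching box $i$, so every rounded profile is nonincreasing. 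With $L_M=1+\lfloor\log_2 M_\eps\rfloor$ dyadic values there are then at most $\binom{n+L_M-1}{L_M-1}\le(n+1)^{L_M}$ profiles, not $L_M^n$. The union therefore costs only $O(\log M_\eps\log(n+1))=O(A^2)$ in $\ell$, which is why the paper takes $\ell=C_\ell A^2$.

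\textbf{Anticipating weights.} In Step~2 the summand $\wh q_i^{\pi}\xi_i$ is not a martingale difference under the revelation order $n,\ldots,1$. The weight $\wh q_i^{\pi}=\prod_{l<i}(1-\wh t_l)$ is a function of $\mathcal S_1,\ldots,\mathcal S_{i-1}$, which are revealed after box $i$. In the forward order, $\xi_i$ fails to be adapted instead. So Freedman's inequality does not apply to the weighted sum, and your $\wh C$-shaped variance is unjustified.

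The paper proves simultaneous exponential bounds only for the unweighted prefix sums $S_k=\sum_{i\le k}\xi_i$. For each $k$ it exposes the arrays after $k$, then boxes $k,k-1,\ldots,1$, with variance proxy $\theta^2\sum_{i\le k}\wh t_i/m_i$, and takes a union over $k$. It then writes $\sum_i\wh q_i\xi_i=\sum_k(\wh q_k-\wh q_{k+1})S_k$, a convex combination of prefix sums. Exchanging sums with $\sum_{k\ge i}(\wh q_k-\wh q_{k+1})=\wh q_i$ and $\wh q_i\wh t_i=\wh q_i-\wh q_{i+1}$ gives exactly $\theta\wh C(\pi)+\ell/\theta$.

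\textbf{Step 3.} The circularity you address there is not real. Condition~1 is a deterministic property of $B$, so it applies to every $\mu$ with $\Delta(\mu)\le8\eps$, even a data-dependent one. The concentration bound also holds for all mixtures regardless of their gap. Localization is therefore just the deterministic bound on $C(\mu)$ plus the transfer $\wh C(\mu)\le2C(\mu)+2\beta/m_{\min}$, and no peeling toward $\pi^\star$ is needed.
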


In order to prove it, we need the following tool.

\begin{lemma}[Finite-family stopping-payoff concentration]
\label{lem:confidence}
Fix $\theta>0$, $\zeta\in(0,1)$, and independent arrays of
$m_i$ IID samples from each $D_i$, where $m_{\min}\ge2\theta$.
Let $\mathfrak P$ be a fixed finite family of triangular
procedures, and let $\mathfrak J$ be a fixed finite family
of stopping payoffs taking values in $[0,1]$.
Set $\ell\ge
\log\frac{2|\mathfrak P|\,|\mathfrak J|\,n}{\zeta}$ and
$\beta=\frac{\ell}{\theta}$. Then, with probability at least $1-\zeta$, simultaneously
for every $r\in\mathfrak J$ and every finite initial mixture
$\mu$ of the output policies of the procedures in $\mathfrak P$,
\[
|\widehat J^r(\mu)-J^r(\mu)|
\le \theta\widehat C(\mu)+\beta.
\]
Here $\widehat J^r(\mu)
=\mathbb E_{\widehat D}[r_{S_\mu}(X_{S_\mu})]$ and $J^r(\mu)
=\mathbb E_D[r_{S_\mu}(X_{S_\mu})]$.
\end{lemma}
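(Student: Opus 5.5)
The plan is to reduce to a single procedure and a single payoff, prove a Bernstein/Freedman-type bound for that pair, and finish with a union bound. \emph{Reduction.} Both $\wh J^r(\mu)-J^r(\mu)$ and $\wh C(\mu)$ are linear in the mixture weights of $\mu$, and $\beta$ is a constant. So it suffices to prove the bound for each output policy $\pi=P(\mathcal S)$ with $P\in\mathfrak P$, and for each $r\in\mathfrak J$; mixtures then follow by averaging. The target is a failure probability of $\zeta/(|\mathfrak P||\mathfrak J|)$ per pair, which is where the factor $2|\mathfrak P||\mathfrak J|n/\zeta$ inside $\ell$ comes from. The extra factor $n$ is reserved for a union bound over boxes, or over a peeling scale, inside the per-pair argument.

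\emph{Hybrid decomposition exploiting triangularity.} For a threshold policy $\pi$, write $W_{i+1}$ for the true expected payoff of continuing from box $i+1$ under $D$ with the realized thresholds $\tau_{i+1},\dots,\tau_n$. Define
\[
g_i(x)=a_i^\pi(x)\,r_i(x)+\bigl(1-a_i^\pi(x)\bigr)W_{i+1}=W_{i+1}+a_i^\pi(x)\,h_i(x),\qquad h_i=r_i-W_{i+1}\in[-1,1].
\]
Swapping $D_i$ for $\wh D_i$ one box at a time gives the telescoping identity
\[
\wh J^r(\pi)-J^r(\pi)=\sum_{i=1}^n \wh q_i^\pi\,\bigl(\E_{\wh D_i}-\E_{D_i}\bigr)\bigl[a_i^\pi h_i\bigr].
\]
By triangularity, $\tau_i$ and $W_{i+1}$ are functions of $\mathcal S_{i+1},\dots,\mathcal S_n$ alone. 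Hence, conditionally on $\mathcal S_{>i}$, the function $a_i^\pi h_i$ is a fixed $[-1,1]$-valued function evaluated on the $m_i$ i.i.d.\ samples $\mathcal S_i$. Its variance is at most $t_i^\pi=\E_{D_i}[a_i^\pi]$. Bernstein's inequality then gives, with probability $1-\zeta'$,
\[
\bigl|(\E_{\wh D_i}-\E_{D_i})[a_i^\pi h_i]\bigr|\le\sqrt{2t_i^\pi\ell/m_i}+\ell/m_i .
\]

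\emph{Aggregation without an $n$ factor.} This is the step I expect to be the main obstacle. Summing the per-box Bernstein bounds and applying AM--GM box by box would produce a term $\ell\sum_i\wh q_i/\theta$, which can be of order $n\ell/\theta$, not $\beta$. I would therefore treat the whole sum as one martingale. Reveal $\mathcal S_n,\mathcal S_{n-1},\dots,\mathcal S_1$ in that order, so that the $i$-th increment is centred given $\mathcal S_{>i}$. Then apply Freedman's inequality with predictable variance proxy
\[
\sum_i \wh q_i^2\,t_i/m_i\;\lesssim\;\sum_i \wh q_i\,t_i/m_i .
\]
Using $\wh q_i\,\wh t_i=\wh q_i-\wh q_{i+1}$, this proxy is approximately $\wh C(\pi)$. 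Freedman's bound $\sqrt{2\ell V}+\ell/m_{\min}$ then converts via AM--GM into
\[
\theta V+\frac{\ell}{2\theta}+\frac{\ell}{m_{\min}}\le \theta\,\wh C(\pi)+\frac{\ell}{\theta}=\theta\,\wh C(\pi)+\beta,
\]
where the last step uses $m_{\min}\ge2\theta$. The random variance $V$ is handled by a peeling or union bound over dyadic levels of $V$; this costs at most a $\log n$-type factor, which is absorbed by the $n$ inside $\ell$.

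Two technical issues must be handled. First, the weights $\wh q_i$ depend on $\mathcal S_{<i}$, so they are not predictable in this filtration. I would fix this by bounding $\sup$ over the weights. Concretely, I would use the fact that $\wh q_i$ is a product of factors $1-\wh t_k$ for $k<i$, and run the martingale forward in $i$ conditionally on all thresholds, or else use a multiplicative concentration of $1-\wh t_k$ around $1-t_k$. Second, I must replace the true $t_i$ by $\wh t_i$ in the variance proxy. A standard empirical-Bernstein comparison does this, namely
\[
t_i\le 2\wh t_i+O(\ell/m_i),
\]
again using $m_{\min}\ge2\theta$ to absorb the additive error into $\beta$. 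Finally, a union bound over $\mathfrak P$, $\mathfrak J$, and the $n$ boxes or levels completes the proof.
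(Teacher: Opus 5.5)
\textbf{Gap 1: the non-predictable weights.} Your hybrid decomposition, and your use of triangularity to centre the $i$-th term given $\mathcal S_{>i}$, match the paper. You also correctly see that $\wh q_i$ is not predictable in the backward filtration. But that is the crux of the lemma, and neither fix you name works.

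\begin{itemize}
\item Conditioning on all thresholds destroys the independence you need. The thresholds $\tau_1,\dots,\tau_{i-1}$ are functions of $\mathcal S_i$, so given them $\mathcal S_i$ is no longer an i.i.d.\ sample from $D_i$, and the $i$-th increment is not centred. No ordering of the arrays makes the increments centred and the weights predictable at once: thresholds depend on later boxes, reach weights on earlier ones.
\item Passing to the true reach $q_i^\pi$ under the realized thresholds does not help. It still depends on $\tau_1,\dots,\tau_{i-1}$, hence on $\mathcal S_2,\dots,\mathcal S_n$. Multiplicative control of a product of up to $n$ factors also brings $n$ back.
\end{itemize}

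The missing idea is summation by parts. Write $\xi_i=(\E_{\wh D_i}-\E_{D_i})[a_ih_i]$. Bound the \emph{unweighted} prefix sums $S_k=\sum_{i\le k}\xi_i$ simultaneously for all $k\in[n]$, in the form $|S_k|\le\theta\sum_{i\le k}\wh t_i/m_i+\beta$. The union over $k$ is exactly what the factor $n$ in $\ell$ pays for. Since $\wh q$ is nonincreasing with $\wh q_1=1$, you have $\sum_i\wh q_i\xi_i=\sum_kc_kS_k$ with $c_k=\wh q_k-\wh q_{k+1}\ge0$ and $\sum_kc_k=1$. Averaging the prefix bounds then gives $\theta\sum_i\wh q_i\wh t_i/m_i+\beta=\theta\wh C(\pi)+\beta$ for arbitrary random weights, so predictability is never needed. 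Each prefix must carry its own variance term for this reassembly to produce $\wh C(\pi)$; a single weighted Freedman bound does not provide that.

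\textbf{Gap 2: the variance step leaks a factor of $n$.} Freedman controls the true proxy $\sum_i\wh q_i^2t_i/m_i$. Substituting $t_i\le2\wh t_i+O(\ell/m_i)$ adds $\theta\sum_i\wh q_i^2\,O(\ell/m_i^2)$, which does not telescope. Take a policy with $\wh q_n^\pi=1$, $m_1=\dots=m_{n-1}=2\theta$ and $m_n$ large. The added term is then of order $n\ell/\theta=n\beta$, whereas $\theta\wh C(\pi)+\beta=\theta/m_n+\beta\approx\beta$. The factor $2$ and the extra $\ell/m_{\min}$ also spoil the exact constants of the statement.

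The paper avoids this with \cref{lem:elementary}, applied with the random envelope $A_0=a_i(Y_{i,s})$ at the fixed slope $t=\pm\theta/m_i$, which is admissible because $m_i\ge2\theta$. Multiplying over the $m_i$ samples gives $\E[\exp\{\pm\theta\xi_i-\theta^2\wh t_i/m_i\}\mid\mathcal S_{>i}]\le1$, so the empirical proxy appears with no additive error. Integrating out $\mathcal S_1,\dots,\mathcal S_k$ in turn, each conditionally on the later arrays, bounds the exponential moment of each prefix. Markov's inequality then gives the linear form $\theta\sum_{i\le k}\wh t_i/m_i+\ell/\theta$ directly. No Freedman step or peeling is needed. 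A peeling union would in any case need a number of levels growing with $\theta$, which the factor $n$ in $\ell$ does not cover in general.
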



A proof based on moment generating function is included in \cref{app:confidence}. Next, we will apply \cref{lem:confidence} to our setting and further simplify the upper bound to $\eta=\frac{\eps}{16}$ probability based on our choice of parameter.


\begin{proof}[Proof of \cref{prop:good}]
As proved in \cref{app:batch}, with conditional probability at least
$1-\delta_h$, the count bounds hold and \cref{lem:confidence} holds for the
realized profile, all the required stopping-payoff functions, and every mixture in $\cC$.
The argument establishes concentration for all deterministic rounded
profiles before selecting the observed profile.

Fix any $\mu\in\cC$ with $\Delta(\mu)\le8\eps$ on this event. Write
$q_i=q_i^\mu$, with $q_{n+1}=0$. The count event imply
$m_i\ge M_\eps q_i/(4G)$ and $m_{\min}\geq \frac{M\eps}{2}$. Let $k$ be the last index with $q_k>\eps$.
Then
\begin{align}
 C(\mu)
 &=\sum_{i=1}^n\frac{q_i-q_{i+1}}{m_i}\notag\\
 &\le\frac{4G}{M_\eps}
       \sum_{i=1}^{k}\frac{q_i-q_{i+1}}{q_i}
       +\frac{\eps}{m_{\min}}\notag\\
 &\le\frac{6G(1+\log(1/\eps))}{M_\eps}.
 \label{eq:true-cost}
\end{align}
For $i<k$, use
$1-q_{i+1}/q_i\le\log(q_i/q_{i+1})$ and telescope. We also use $q_{k+1}\le\eps$ and contributes at most $2/M_\eps$.

Apply \cref{lem:confidence} to $r_i(x)=m_{\min}/m_i$. Its true and
empirical expectations are $m_{\min}C(\mu)$ and
$m_{\min}\wh C(\mu)$. Thus $m_{\min}|\wh C(\mu)-C(\mu)|\le\theta\wh C(\mu)+\beta$. Since $m_{\min}\ge2\theta$,
\begin{equation}
 \wh C(\mu)\le2C(\mu)+2\beta/m_{\min},\qquad
 \theta\wh C(\mu)+\beta\le2\theta C(\mu)+2\beta.
 \label{eq:cost-transfer}
\end{equation}
Combining \cref{eq:true-cost} yields
\begin{equation}
 \theta\wh C(\mu)+\beta
 \le\frac{48\ell G(1+\log(1/\eps))}{M_\eps\eta}
       +\frac\eta2
 \le\eta.
 \label{eq:local-radius}
\end{equation}
For an executed phase, $1+\log(1/\eps)\le A$ when $C_M\ge1$.
Since $\ell=C_\ell A^2$, $G=64A$, $\eta=\eps/16$, and
$M_\eps\ge C_M A^4/\eps^2$, a sufficiently large universal $C_M$
makes the first term at most $\eta/2$. The value and reach cases of
\cref{lem:confidence} completes the proof.
\end{proof}

\section{Phase Closure and Final Regret}
\label{sec:closure}
In this section, we wrap up the one phase induction proof and use it to prove main theorem (\cref{thm:main}).

\begin{proposition}[Invariant propagation]
\label{prop:closure}
Under Condition 1 and the event of
\cref{prop:good}, the phase output satisfies for every $i\in[n]$, $\Delta(B')\le8\eps$ and $q_i^{B'}\ge\max\{\frac{q_i^{\pi}}{G},\eps\}\;\text{$\forall \pi$ with $\Delta(\pi)\le4\eps$}$.
\end{proposition}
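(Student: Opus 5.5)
The proof has two parts: a value bound $\Delta(B')\le 8\eps$, and an exploration bound $q_i^{B'}\ge \max\{q_i^\pi/G,\eps\}$. Both work on the event of \cref{prop:good}, which gives accuracy $\eps/16$ for value and reach on every $\mu\in\cC$ with $\Delta(\mu)\le 8\eps$. Note that $\pi^\star\in\cC$ and $\cC$ is closed under mixing, so \cref{lem:localization} applies with accuracy $\eps/16\le\eps/2$.

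\textbf{Value.} Each $\sigma_j$ lies in $\cC$. By \eqref{eq:empirical-mixture-value}, $\wh V(\sigma_j)\ge v_0-6\eps\ge \wh V(\pi^\star)-6\eps$, since $\pi_0$ is the empirical maximizer. \cref{lem:localization} then gives $\Delta(\sigma_j)\le 7\eps$; the same holds for $\sigma_1=\pi_0$. By linearity,
\[
\Delta(B')\le \eps\,\Delta(F)+(1-\eps)\cdot 7\eps\le \eps+7\eps=8\eps .
\]

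\textbf{Exploration.} First I verify the hypothesis of \cref{lem:drop}. Since $\Delta(\sigma_j)\le7\eps\le8\eps$, \cref{prop:good} gives $q_j^{\sigma_j}\ge \wh q_j^{\sigma_j}-\eps/16$. Also $\wh q_j^{\sigma_j}\ge w_{j,z_j}\wh q_j^{\pi_{j,z_j}}=s_j$. So \cref{lem:drop} yields
\[
q_k^{B'}\ge \max\Bigl\{\eps,\ \frac{R_k}{2\log(1/\eps)}\Bigr\}.
\]
The $\eps$ part is immediate. It remains to show $R_k\ge c\,q_k^\pi$ for every $\pi$ with $\Delta(\pi)\le 4\eps$, with $c$ large enough that $G=64A$ absorbs $2\log(1/\eps)/c$. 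Here $\log(1/\eps)\le A$ for executed phases.

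Fix such $\pi$ and $k\ge2$ (for $k=1$, $R_1=1$). Take the witness $\pi^{(k)}$, which lies in $\cC$. It satisfies $q_k^{\pi^{(k)}}\ge q_k^\pi-\eps/16$ and $\Delta(\pi^{(k)})\le 4\eps$. By \cref{prop:good},
\[
\wh V(\pi^{(k)})\ge V^\star-4\eps-\eps/16 \quad\text{and}\quad \wh V(\pi_0)\le V(\pi_0)+\eps/16\le V^\star+\eps/16,
\]
where the second uses that $\pi_0$ itself has $\Delta\le7\eps$ by \cref{lem:localization}. Hence $\wh V(\pi^{(k)})\ge v_0-a$ with $a=6\eps$, and also $\wh q_k^{\pi^{(k)}}\ge q_k^\pi-\eps/8$.

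\cref{lem:bonus} is stated for a policy $\pi^{\rm}$; I apply it to the mixture $\pi^{(k)}$. This is legitimate because the optimality comparison in that proof is against a mixture, and $\wh V+z\wh q_k$ is linear, so the maximum over $\Pi$ is attained at a threshold policy. The lemma gives $s_k\ge \tfrac14\wh q_k^{\pi^{(k)}}$ whenever $\wh q_k^{\pi^{(k)}}\ge 24\eps$. Then $R_k\ge s_k\ge \tfrac14(q_k^\pi-\eps/8)$.

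\textbf{Case split.} Suppose $q_k^\pi\ge 32\eps$. Then $\wh q_k^{\pi^{(k)}}\ge 24\eps$, so the lemma applies, and $R_k\ge q_k^\pi/5$ since $\eps/8$ is small relative to $q_k^\pi$. This gives
\[
q_k^{B'}\ge \frac{q_k^\pi}{10\log(1/\eps)}\ge \frac{q_k^\pi}{G}.
\]
Otherwise $q_k^\pi<32\eps$, so $q_k^\pi/G<\eps\le q_k^{B'}$ because $G\ge 64$.

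\textbf{Main obstacle.} The delicate step is checking the preconditions for the witness mixture: that it belongs to $\cC$ with $\Delta\le8\eps$ so accuracy applies, and that the $a=6\eps$ slack covers $4\eps$ plus two accuracy errors. Both checks go through as above. The remaining constants ($1/16$ errors, the factor $1/4$, $\log(1/\eps)\le A$) fit comfortably within $G=64A$.
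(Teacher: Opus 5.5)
Your proposal is correct and follows the paper's proof: \cref{lem:localization} applied to $\pi_0$ and each $\sigma_j$ gives $\Delta(B')\le 8\eps$, the transfer $q_j^{\sigma_j}\ge s_j-\eps/16$ lets you invoke \cref{lem:drop}, and the witness $\pi^{(k)}$ fed into \cref{lem:bonus} gives coverage. The only difference is presentational: you split on $q_k^\pi\ge 32\eps$ and use the sharper error $\eps/8$, whereas the paper merges both cases into the single bound $q_i^\pi\le 25\eps+4s_i\le(25+8\log(1/\eps))\,q_i^{B'}$.
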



\begin{proof}
Work on the event of \cref{prop:good}, and recall that
$a=6\eps$ and $\eta=\eps/16$.
Empirical optimality and \cref{eq:empirical-mixture-value} give $\wh V(\rho)\ge v_0-a\ge\wh V(\pi^\star)-6\eps$
for $\rho=\pi_0$ and every candidate mixture $\rho=\sigma_{j,z}$.
By \cref{lem:localization}, $\Delta(\rho)\le7\eps$, so local
accuracy applies to all these policies: for every candidate mixture, 
\begin{equation}
  \begin{aligned}
    q_j^{\sigma_{j,z}}
    &\ge\wh q_j^{\sigma_{j,z}}-\eta\\
    &=(1-w_{j,z})\wh q_j^{\pi_0}+s_{j,z}-\eta
    \ge s_{j,z}-\eta.
  \end{aligned}
  \label{eq:score-transfer}
\end{equation}
Thus \cref{lem:drop} ensures that $B'$ is well defined and $q_i^{B'}\ge
    \max\left\{\eps,\frac{R_i}{2\log(1/\eps)}\right\}
    \;\forall i$. Together with linearity
and $\Delta(F)\le1$, it yields $\Delta(B')\le(1-\eps)7\eps+\eps\leq 8\eps$.



It remains to prove coverage. The case $i=1$ is immediate.
Fix $i\ge2$ and a policy $\pi$ with $\Delta(\pi)\le4\eps$.
The witness policy $\pi^{(i)}$ and local accuracy give
\[
  v_0-\wh V(\pi^{(i)})\le6\eps,
  \qquad
  q_i^\pi\le\wh q_i^{\pi^{(i)}}+\eps.
\]
Define 
$z_i=\arg\max_{z\in\mathcal Z_\epsilon}s_{i,z}$ and 
$s_i:=s_{i,z_i}
=\max_{z\in\mathcal Z_\epsilon}s_{i,z}$. By \cref{lem:bonus}, the score-maximizing choice of $z_i$
ensures that
$\wh q_i^{\pi^{(i)}}\le4s_i$ whenever
$\wh q_i^{\pi^{(i)}}\ge24\eps$.
Consequently,
\[
  \wh q_i^{\pi^{(i)}}\le24\eps+4s_i
\]
Since \cref{lem:drop} gives
$q_i^{B'}\ge\eps$ and
$s_i\le R_i\le2\log(1/\eps)\,q_i^{B'}$, we obtain
\[
  q_i^\pi
  \le25\eps+4s_i
  \le\bigl(25+8\log(1/\eps)\bigr)q_i^{B'}
  \le Gq_i^{B'}.
\]
This completes the proof.
\end{proof}

Now we are ready to present the proof of \cref{thm:main}.

\begin{proof}[Proof of \cref{thm:main}]
    Since $\Delta(F)\le1$ and $q_i^F=1$ for every $i$, the initial
behavior $B_0=F$ satisfies Condition 1 at scale $\eps_0=1/8$.
Let $\mathcal E$ denote all phase high probability guarantees for concentration hold, we have $\Pp(\mathcal E^c) \le\sum_{h\ge0}\delta_h=\frac{\pi^2}{12}\delta<\delta$.

Work on $\mathcal E$. Recall $M_\eps=\left\lceil C_M A^4/\eps^2\right\rceil$. Each completed phase $h$ contributes at most $M_{\eps_h}\Delta(B_h) \le16\eps_h M_{\eps_h}=O(A^4/\eps_h)$ to pseudo-regret. Let $H$ be the last completed phase.
The remaining rounds are fewer than $M_{\eps_H/2}$ and use
$B_{H+1}$, whose gap is at most $8\eps_H$. Therefore, 
$$\mathcal R_T
  \le16\sum_{h=0}^{H}\eps_h M_{\eps_h}
     +8\eps_H M_{\eps_H/2}
  =O(A^4/\eps_H),$$ where we use $\sum_{h=0}^{H}\eps_h^{-1}\le2\eps_H^{-1}$.
Since the last completed batch fits within the horizon, $\frac{C_M A^4}{\eps_H^2}\le M_{\eps_H}\le T$, 
and hence $\mathcal R_T=O(A^2\sqrt T)$. 

Finally, $\Reg_T=\E[\mathcal R_T]
  \le CA^2\sqrt T+T\,\Pp(\mathcal E^c)
  \le CA^2\sqrt T+\delta T$. Take $\delta=\frac{1}{T^2}$, this completes the proof.
\end{proof}

\section{Conclusion and Future Direction}
In this work, we studied repeated prophet inequality with unknown independent distributions under prefix feedback. We gave an efficient algorithm
achieving $O(\sqrt{T}\,[1+\log(nT)]^2)$ expected regret against the optimal stopping policy that knows the distributions. This matches the $\Omega(\sqrt{T})$ lower
bound up to logarithmic factors and removes the polynomial dependence
on the number of boxes from previous work. A natural direction is to sharpen the remaining logarithmic factors, particularly to determine whether any dependence on the number of boxes is necessary in the regret bound. 

\paragraph{Acknowledgement} The author thanks his advisor, Paul Valiant, provides a counterexample for the algorithm in his previous work in NeurIPS 2025. After this, the author works closely with ChatGPT 5.5 Pro, ChatGPT 5.6 Pro and ChatGPT Astra  to explore new algorithmic idea and simplify analysis. The author takes full responsibility of the correctness for all proofs.

\clearpage
\bibliographystyle{plainnat}
\bibliography{ref}

\newpage
\appendix
\section{Proof of Finite-family Stopping-payoff Concentration}
\label{app:confidence}
In this appendix, we prove the main concentration inequality (\cref{lem:confidence}) based on standard chernoff bound analysis.

\subsection{An exponential inequality}

\begin{lemma}
\label[lemma]{lem:elementary}
Let $W,A_0$ be random variables satisfying $|W|\le A_0\le1$
almost surely. For every $|t|\le1/2$,
\begin{equation}
  \E\exp\{t(W-\E W)-t^2A_0\}\le1.
  \label{eq:elementary}
\end{equation}
The same inequality holds with conditional expectations.
\end{lemma}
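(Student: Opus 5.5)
We want $\E\exp\{t(W-\E W)-t^2A_0\}\le 1$ for $|t|\le 1/2$ whenever $|W|\le A_0\le 1$. The plan is to prove a pointwise exponential inequality and then take expectations. The key elementary fact is that for $|x|\le 1/2$,
\[
e^{x-x^2}\le 1+x .
\]
Set $g(x)=\log(1+x)-x+x^2$. Then $g(0)=0$ and
\[
g'(x)=\frac{1}{1+x}-1+2x=\frac{x(1+2x)}{1+x}.
\]
This derivative is nonnegative for $x\ge 0$. For $-1/2\le x\le 0$ we have $x\le 0$ and $1+2x\ge 0$, so $g'(x)\le 0$. Hence $g$ attains its minimum $0$ at $x=0$ on $[-1/2,1/2]$, which gives $g\ge 0$ there.

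First, apply this fact with $x=tW$. Since $|tW|\le |t|\le 1/2$, we get
\[
e^{tW-t^2W^2}\le 1+tW .
\]
Second, use $W^2\le |W|\le A_0$, which holds because $|W|\le A_0\le 1$. Then $-t^2A_0\le -t^2W^2$, and therefore
\[
e^{tW-t^2A_0}\le 1+tW .
\]
Third, take expectations: $\E e^{tW-t^2A_0}\le 1+t\E W\le e^{t\E W}$. Multiplying both sides by $e^{-t\E W}$ gives
\[
\E\exp\{t(W-\E W)-t^2A_0\}\le 1,
\]
which is \cref{eq:elementary}.

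The conditional version follows by the same argument. The pointwise bound $e^{tW-t^2A_0}\le 1+tW$ holds almost surely, so take conditional expectations of both sides. Then use $1+y\le e^y$ with $y=t\E[W\mid\mathcal F]$, and multiply by the $\mathcal F$-measurable factor $e^{-t\E[W\mid\mathcal F]}$.

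The only step that needs care is the scalar inequality $e^{x-x^2}\le 1+x$ on the negative side $x\in[-1/2,0]$. This is exactly where the restriction $|t|\le 1/2$ is used, since it keeps $1+2x\ge 0$ and hence $g'\le 0$ there. Everything else is monotonicity together with $1+y\le e^y$.
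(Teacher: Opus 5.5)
Your proposal is correct and follows essentially the paper's proof: the pointwise bound $tW-t^2A_0\le tW-t^2W^2\le\log(1+tW)$, then expectations and $1+y\le e^y$. The only addition is your derivative check of $\log(1+x)\ge x-x^2$ on $[-1/2,1/2]$, which the paper simply cites as elementary.
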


\begin{proof}
The elementary inequality $\log(1+u)\ge u-u^2 \;\forall |u|\le1/2$. Since $|tW|\le1/2$ and $W^2\le A_0$, we have $tW-t^2A_0 \le tW-t^2W^2 \le \log(1+tW)$.
Consequently, $\E e^{tW-t^2A_0}\le 1+t\E W \le e^{t\E W}$. Multiplying by $e^{-t\E W}$ proves the claim.
The argument also applies with conditional expectations.
\end{proof}

\subsection{An identity with empirical reach weights}

Fix a triangular procedure and a payoff sequence $r_i(x)\in[0,1]$.
For its realized stopping functions $a_i$, define true and empirical
continuations
\begin{align}
 J_{n+1}&=\wh J_{n+1}=0,\notag\\
 J_i&=P_i[a_i r_i+(1-a_i)J_{i+1}],&
 \wh J_i&=\wh P_i[a_i r_i+(1-a_i)\wh J_{i+1}],
 \label{eq:eval-recursion}
\end{align}
where $P_i$ and $\wh P_i$ denote true and empirical expectations.
All continuations lie in $[0,1]$. Although $J_{i+1}$ is unknown to the
algorithm, it depends only on suffix samples and the fixed true distributions.

Define
\begin{equation}
 f_i(x)=a_i(x)r_i(x)+(1-a_i(x))J_{i+1},\qquad
 \xi_i=(\wh P_i-P_i)f_i.
 \label{eq:xi}
\end{equation}
Adding and subtracting $\wh P_i f_i$ in \cref{eq:eval-recursion} yields
\[
 \wh J_i-J_i=\xi_i+\wh P_i[1-a_i](\wh J_{i+1}-J_{i+1}).
\]
With $\wh q_1=1$ and $\wh q_{i+1}=\wh q_i\wh P_i[1-a_i]$, telescoping gives
\begin{equation}
 \wh J_1-J_1=\sum_{i=1}^n\wh q_i\xi_i.
 \label{eq:empirical-weight-identity}
\end{equation}

\subsection{Uniform unweighted prefix bounds}


Conditional on suffix samples, put
$W_i(x)=a_i(x)(r_i(x)-J_{i+1})$. Then $|W_i(x)|\le a_i(x)\le1$, and \((\widehat P_i-P_i)J_{i+1}=0\), thus \(\xi_i=(\widehat P_i-P_i)W_i\). Write the samples from box \(i\) as
$Y_{i,1},\ldots,Y_{i,m_i}$. Apply \cref{lem:elementary} to each sample of box $i$ with
\(W=W_i(Y_{i,s}), A_0=a_i(Y_{i,s}), t=\pm\frac{\theta}{m_i}\). This is allowed because $m_i\ge2\theta$. Then we have \(\mathbb E\!\left[
\exp\left\{
\pm\frac{\theta}{m_i}
\bigl(W_i(Y_{i,s})-P_iW_i\bigr)
-\frac{\theta^2}{m_i^2}a_i(Y_{i,s})
\right\}\middle|\,\text{suffix}
\
\right]\le1\). Write $\wh t_i=\wh P_i[a_i]$.
We apply it to all  $m_i$ samples from box $i$, by independence,
\[\mathbb E\!\left[
\exp\left\{
\pm\theta\xi_i-\theta^2\frac{\widehat t_i}{m_i}
\right\}
\,\middle|\,\text{suffix}
\right]\le1.\]
 For a fixed prefix $1,\ldots,k$, first expose all arrays after $k$, then the
blocks $k,k-1,\ldots,1$. Therefore,
\[
 \E\exp\left\{
 \pm\theta\sum_{i=1}^k\xi_i
 -\theta^2\sum_{i=1}^k\frac{\wh t_i}{m_i}
 \right\}\le1.
\]
Markov's inequality and a union bound over both signs, all procedures,
stopping-payoff function, and $k$ (for all prefixes) imply with probability at least $1-\zeta$,
\begin{equation}
 \left|\sum_{i=1}^k\xi_i\right|
 \le \theta\sum_{i=1}^k\frac{\wh t_i}{m_i}+\beta
 \quad\text{simultaneously for all these choices}.
 \label{eq:prefix-emp}
\end{equation}
\subsection{Inserting empirical reach weights}

Work on the simultaneous event in \cref{eq:prefix-emp}.
Fix a component policy $\pi$ and a stopping payoff $r$,
and suppress their dependence in the notation. Set $S_k=\sum_{i=1}^k\xi_i$, $c_k=\wh q_k-\wh q_{k+1}$, with the convention $\wh q_{n+1}=0$.
Since the empirical reach probabilities are nonincreasing
and $\wh q_1=1$, we have $c_k\ge0$ and $\sum_{k=1}^n c_k=1$.
Summation by parts gives $\sum_{i=1}^n\wh q_i\xi_i
  =\sum_{k=1}^n c_kS_k$.

Applying \cref{eq:prefix-emp} and interchanging finite sums,
we obtain
\begin{align*}
  \left|\wh J_1-J_1\right|=\left|\sum_{i=1}^n\wh q_i\xi_i\right|
  &\le\sum_{k=1}^n c_k|S_k|\le\sum_{k=1}^n c_k
       \left(\theta\sum_{i=1}^k\frac{\wh t_i}{m_i}+\beta\right)\\
  &=\theta\sum_{i=1}^n\frac{\wh t_i}{m_i}
       \sum_{k=i}^n c_k+\beta=\theta\sum_{i=1}^n\frac{\wh q_i\wh t_i}{m_i}+\beta\\
  &=\theta\sum_{i=1}^n
       \frac{\wh q_i-\wh q_{i+1}}{m_i}+\beta=\theta\wh C(\pi)+\beta.
\end{align*}
Here we used
$\sum_{k=i}^n c_k=\wh q_i$ and
$\wh q_{i+1}=\wh q_i(1-\wh t_i)$.
Together with \cref{eq:empirical-weight-identity},
this proves \cref{lem:confidence} for every component policy. The mixture of component policies will retain the same upper bound. This completes the proof of \cref{lem:confidence}.



\section{Sample counts and concentration within a phase}
\label{app:batch}

This appendix shows that each phase collects enough observations and
that the concentration lemma applies to the samples retained by the
algorithm. Together, they justify
the sampling and concentration event used in \cref{prop:good}.





\subsection{Each phase collect enough observations}

This section proves that one phase collects enough observations at every box. More specifically, we will show $m_i>N_i/2\ge Mb_i/4$ and $m_{\min}\ge M\eps/2$.

Within the phase, $B$ is frozen and executions use independent fresh
randomness. Recall $b_i=q_i^B$,
$N_i$ is the number of rounds that reach box $i$, $m_i=2^{\lfloor\log_2N_i\rfloor}$. Each $N_i$ is binomial with mean $Mb_i$. Let
$N\sim\operatorname{Bin}(M,b)$ and $\mu=Mb$, Markov's inequality gives
\[
 \Pp(N\le\mu/2)
 \le e^{\mu/4}\E[e^{-N/2}]
 \le\exp\!\left\{\mu\left(\frac14+e^{-1/2}-1\right)\right\}
 \le e^{-\mu/8}.
\]
Here $\E[e^{-N/2}]=(1-b+be^{-1/2})^M$ and $1+u\le e^u$ were used.
A union bound therefore gives
\[
 \Pp\!\left(\exists i:\ N_i<Mb_i/2\right)
 \le\sum_{i=1}^n e^{-Mb_i/8}\le\delta_h/2
\]
for sufficiently large $C_M$, because $b_i\ge2\eps$ and
$M\ge C_M A^4/\eps^2$. On the complementary event all counts are nonzero,
and rounding down to powers of two yields
\[
 m_i>N_i/2\ge Mb_i/4,\qquad m_{\min}\ge M\eps/2.
\]
The inequality $m_{\min}\geq M\eps/2\ge 2\theta$ follows for a sufficiently large
universal $C_M$. Thus the selected profile is eligible for the simultaneous concentration event. Intersecting this event and the profile-uniform concentration event loses probability at most $\delta_h$ and establishes all the sampling claims used by \cref{prop:good}.

\subsection{A finite union over rounded profiles}

Recall $M$ is the total number of fresh rounds we play $B$ at each phase. Let $L_M=1+\lfloor\log_2M\rfloor$. For each box, there are exactly $L_M$ possible values. In addtion, every rounded profile is nonincreasing. The number of profiles
is upper bounded by
\begin{equation}
 |\mathcal M_M|=\binom{n+L_M-1}{L_M-1}
 \le(n+1)^{L_M},\qquad
 \log|\mathcal M_M|=O(\log M\log(n+1)).
 \label{eq:profiles}
\end{equation}
For each profile satisfying $m_{\min}\ge2\theta$, every generated
backward-induction procedure is triangular. Include the constant
procedures for $F$, $\pi^\star$, and the $2n$ witnesses decomposition. Thus
$|\mathfrak P|=O(n[1+\log(1/\eps)])$. For this profile use the $n+2$
stopping-payoff functions consisting of value, all reach payoffs, and
$r_i(x)=m_{\min}/m_i$.

Apply \cref{lem:confidence} separately to every eligible profile with
failure budget $\zeta=\delta_h/(2|\mathcal M_M|)$. A union bound gives one event
of conditional probability at least $1-\delta_h/2$ on which concentration
holds for all those profiles, every component, and all their initial
mixtures. It suffices to choose
\[
 \ell\ge
 \log\frac{4|\mathcal M_M|\,|\mathfrak P|\,(n+2)n}{\delta_h}.
\]
For an executed phase, $M\le T$, $h=O(\log T)$, and
$\log(1/\eps)=O(\log T)$. Hence $\ell=C_\ell A^2$ dominates the required
logarithm for a sufficiently large universal $C_\ell$.

\end{document}